\icmltitlerunning{Correlated latent variables accelerate learning with neural
  networks}
\renewcommand{\P}{{\mathbb P}}
\def\Q{{\mathbb Q}}
\newcommand{\R}{{\mathbb R}}
\newcommand{\N}{{\mathbb N}}
\newcommand{\E}{{\mathbb E}}
\definecolor{C0}{HTML}{1f77b4}
\definecolor{C1}{HTML}{ff7f0e}
\definecolor{C2}{HTML}{2ca02c}
\definecolor{C3}{HTML}{d62728}
\definecolor{C4}{HTML}{9467bd}
\definecolor{C5}{HTML}{8c564b}
\definecolor{C6}{HTML}{e377c2}
\definecolor{C7}{HTML}{7f7f7f}
\definecolor{C8}{HTML}{bcbd22}
\definecolor{C9}{HTML}{17becf}
\definecolor{gaussian}{HTML}{47908C}
\newtheorem{theorem}{Theorem}
\newtheorem{lemma}[theorem]{Lemma}
\newtheorem{proposition}[theorem]{Proposition}
\newtheorem{remark}[theorem]{Remark}
\newtheorem{assumption}{Assumption}
\def\Q{\mathbb Q}
\def\P{\mathbb P}
\def\E{\mathbb E}
\def\N{\mathbb N}
\def\R{\mathbb R}
\def\amult{{\boldsymbol{\alpha}}}
\def\nablasph{\nabla_{\mathrm{sph}}}
\def\id{\mathbbm{1}}
\newcommand{\EE}{\mathbb{E}\,}
\newcommand{\reals}{\mathbb{R}}
\def\id{\mathbbm{1}}
\begin{document}

\twocolumn[
\icmltitle{Sliding Down the Stairs: How Correlated Latent Variables\\ Accelerate
  Learning with Neural Networks}

\icmlsetsymbol{equal}{*}

\begin{icmlauthorlist}
\icmlauthor{Lorenzo Bardone}{sissa}
\icmlauthor{Sebastian Goldt}{sissa}
\end{icmlauthorlist}

\icmlaffiliation{sissa}{International School of Advanced Studies, Trieste, Italy}

\icmlcorrespondingauthor{Lorenzo Bardone}{lbardone@sissa.it}
\icmlcorrespondingauthor{Sebastian Goldt}{sgoldt@sissa.it}

\icmlkeywords{Machine Learning, ICML}

\vskip 0.3in
]



\printAffiliationsAndNotice{}  

\begin{abstract}
  Neural networks extract features from data using stochastic gradient descent
(SGD). In particular, higher-order input cumulants (HOCs) are crucial for their performance. However, extracting information from the $p$th cumulant of~$d$\nobreakdash-dimensional inputs is computationally hard: the number of samples required to recover
a single direction from an order\nobreakdash-$p$ tensor  (tensor PCA) using online SGD grows as $d^{p-1}$, which is prohibitive for high-dimensional inputs. This result raises the question of how neural networks extract relevant directions from the HOCs of their inputs \emph{efficiently}. Here, we show that correlations between latent variables along the directions encoded in different input cumulants
speed up learning from higher-order correlations. We show this effect
analytically by deriving nearly sharp thresholds for the number of samples required by a single neuron to weakly-recover these directions using online SGD from a random start in high dimensions. Our analytical results are confirmed in simulations of two-layer neural networks and unveil a new mechanism for hierarchical learning in
neural networks.

\end{abstract}
\section{Introduction}
Neural networks excel at learning rich representations of their
data, 
but which parts of a data set are actually important for them? 
From a statistical point of view, we can decompose the
data distribution into cumulants, which capture correlations between groups of
variables
.
The first cumulant is the mean, the
second describes pair-wise correlations, and \emph{higher-order
  cumulants} (HOCs) encode correlations between three or more variables. In
image classification, HOCs are particularly important: on
CIFAR10, removing HOCs from the training distribution incurs a drop in test
accuracy of up to 65 $\%$ for 
DenseNets, ResNets, and Vision
transformers~\cite{refinetti2023neural}.

While HOCs are important for the performance of neural networks, extracting information from HOCs with stochastic gradient descent (SGD) is computationally hard. Take the simple case of tensor PCA~\cite{richard2014statistical}, where one aims to recover a ``spike'' $u \in \reals^d$ from an order-$p$ tensor $T$, which could be the order-$p$ cumulant of the inputs. Modelling the tensor as $T = \beta u^{\otimes p} + Z$, with signal-to-noise ratio (SNR) $\beta > 0$ and noise tensor $Z$ with i.i.d.\ entries of mean zero and variance~1, \citet{benarous2021online} showed that online SGD with a single neuron requires a number of samples $n \gtrsim d^{p-1}$ to recover $u$. 
While smoothing the loss landscape can reduce to ~$n \gtrsim d^{p/2}$, as suggested by Correlational Statistical Query bounds (CSQ)~\citep{damian2022neural, abbe2023sgd}, a sample complexity that is exponential in $p$ is too expensive for high-dimensional inputs like images. 
For supervised learning, \citet{szekely2023learning} showed that the number of samples required to strongly distinguish
two classes of inputs $x\in\mathbb{R}^d$ whose distributions have the same mean and covariance, but different fourth-and higher-order cumulants, scales as $n \gtrsim d^2$ for the wide class of polynomial-time algorithms covered by the low-degree conjecture~\citep{barak2019nearly,
  hopkins2017power, hopkins2017bayesian, hopkins2018statistical}. Their numerical
experiments confirmed that two-layer neural networks indeed require quadratic
sample complexity to learn this binary classification task.

The theoretical difficulty of learning from HOCs is in apparent contradiction to the importance of HOCs for the performance of neural networks, and the speed with which neural networks pick them up~\cite{ingrosso2022data, refinetti2023neural, merger2023learning, belrose2024neural}, 
lead us to the following question: 
\begin{quote}
  \itshape \centering 
  How do neural networks extract information from
  higher-order input correlations \emph{efficiently}?
\end{quote}
In this paper, we show that neural networks can learn efficiently from higher-order cumulants by exploiting \textbf{correlations between the latent variables of type $u \cdot x$ corresponding to input cumulants of different orders}. 

\begin{figure*}[t!]
  \centering
  \includegraphics[trim=0 620 75 0,clip,width=\linewidth]{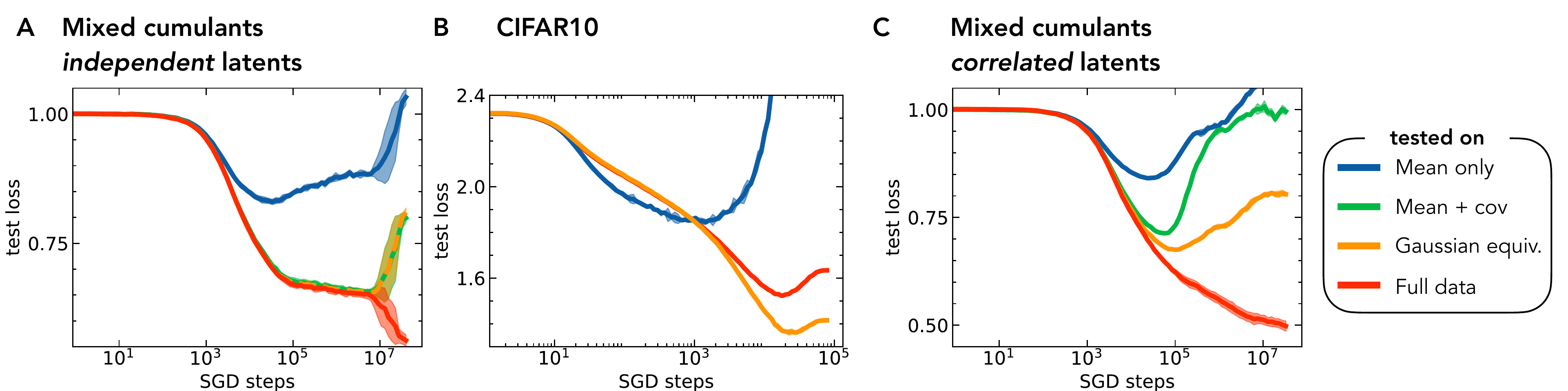}
  \caption{\label{fig:figure1} \textbf{Correlated latent variables speed up
      learning of neural networks.} \textbf{A} Test error of a two-layer neural
    network trained on the mixed cumulant model (MCM) of
    \cref{eq:mixed-cumulant-model} with signal-to-noise ratios
    $\beta_m=1, \beta_u=5, \beta_v = 10$. The MCM is a binary classification
    tasks where the inputs in the two classes have a different mean, a different
    covariance, and different higher-order cumulants (HOCs). We show the test
    error on the full data set (red) and on several ``censored'' data sets: a
    test set where only the mean of the inputs is different in each class (blue,
    $\beta_m=1, \beta_u=\beta_v=0$), a test set where mean and covariance are
    different (green, $\beta_m=1, \beta_u = 5, \beta_v=0$), and a Gaussian
    mixture that is fitted to the true data set (orange). The neural networks
    learn distributions of increasing complexity: initially, only the difference
    means matter, as the blue and red curves coincide; later, the network learns
    about differences at the level of the covariance, and finally at the level
    of higher-order cumulants. \textbf{B} Test loss of a two-layer neural
    network trained on CIFAR10 and evaluated on CIFAR10 (red), a Gaussian
    mixture with the means fitted to CIFAR10 (blue) and a Gaussian mixture with
    the means and covariance fitted on CIFAR10 (orange). \textbf{C}~Same setup
    as in \textbf{A}, but here the latent variables corresponding to the
    covariance and the cumulants of the inputs are correlated, leading to a
    significant speed-up of learning from HOCs (the red and orange line separate
    after $\gtrsim 10^{4}$ steps, rather than $\gtrsim 10^6$
    steps). \emph{Parameters:} $\beta_m=1, \beta_u=5, \beta_v=10, d=128, m=512$
    hidden neurons, ReLU activation function. Full details in
    \cref{app:figure-details}.}
\end{figure*}

\section{Results and insights: an informal overview}%
\label{sec:mixed-cumulant-model}

\subsection{The mixed cumulant model (MCM)}

We illustrate how correlated latent variables speed up learning by introducing a
simple model for the data, the mixed-cumulant model (MCM). The MCM is a binary
discrimination task where the signals that differentiate the two
classes are carried by different input cumulants. Specifically, we draw
$n$ data points $x^\mu=(x^\mu_i)\in\reals^d$ with $\mu=0, 1, \ldots, n$ either
from the isotropic Gaussian distribution~$\mathbb{Q}_{0}$ (label $y^\mu = -1$) or
from a \emph{planted distribution}~$\mathbb{Q}_{\text{plant}}$ (label $y^\mu=1$). Under
$\mathbb{Q}_{0}$, inputs have zero mean, an isotropic covariance equal to the
identity~$\id$, and all HOCs are zero. Under $\mathbb{Q}_{\text{plant}}$, the first few
cumulants each carry a signal that distinguishes the inputs from those in
$\mathbb{Q}_{0}$: a non-zero mean $m \in \mathbb{R}^d$, a covariance which is
isotropic except in the direction $u \in \mathbb{R}^d$, and higher-order
cumulants proportional to $v^{\otimes k}, k \ge 2$.  

We sample an input from
$\mathbb{Q}_{\text{plant}}$ by first drawing an i.i.d.\ Gaussian vector $z^\mu$ and two scalar
latent variables, the normally distributed $\lambda^\mu\sim \mathcal{N}(0, 1)$
and $\nu^\mu$, which is drawn from a non-Gaussian distribution. For concreteness, we will assume that
$\nu^\mu = \pm 1$ with equal probability. Both latent variables are independent
of $z$. Then, for $y^\mu=1$, we have
\begin{equation}
  \label{eq:mixed-cumulant-model}
  x^\mu = \underbrace{\overbrace{\beta_m m}^{\text{\textcolor{C0}{mean}}} +
  \overbrace{\sqrt{\beta_u} \lambda^\mu u}^{\text{covariance}}}_{\textcolor{C2}{\text{Gaussian}}}+
  \underbrace{S(\sqrt{\beta_v} \nu^\mu v}_{\text{\textcolor{C1}{HOCs}}} + z^\mu).
\end{equation}
where $\beta_i \ge 0$ are the signal-to-noise ratios associated to the three
directions, or ``spikes'', $m,u,v$. The spikes are fixed and drawn uniformly from the unit
sphere. We will sometimes force them to be orthogonal to one another. If $\beta_v=0$, it is easy to verify that inputs are Gaussian
with mean $\beta_m m$ and covariance $\id + \beta_u u u^\top$. If
$\beta_v>0$, inputs are non-Gaussian but the presence of the whitening matrix
\begin{equation}
  \label{eq:whitening}
  S = \id - \frac{\beta_v}{1 + \beta_v + \sqrt{1 + \beta_v}} v v^\top
\end{equation}
removes the direction $v$ from the covariance matrix, so that $v$ cannot be recovered from the input covariance if the latent variables $\lambda^\nu, \nu^\mu$ are uncorrelated.

\subsection{Neural networks take a long time to learn the cumulant spike in the vanilla MCM}

We show the performance of a two-layer neural network trained on the MCM model in \cref{fig:figure1}A with signal-to-noise ratios
$\beta_m=1, \beta_u=5, \beta_v=10$ and independent latent variables
$\lambda^\mu \sim \mathcal{N}(0, 1)$ and $\nu^\mu = \pm 1$ with equal
probability (red line). We can evaluate which of the three directions
have been learnt by the network at any point in time by \emph{evaluating} the
same network on a reduced test set where only a subset of the spikes are
present. Testing the network on a test set where the only
difference between the two classes $\mathbb{Q}_{0}$ and $\mathbb{Q}_{\text{plant}}$ are the mean of
the inputs (\textcolor{C0}{blue}, $\beta_u = \beta_v = 0$) or the \textcolor{C2}{mean and
  covariance} ($\beta_v=0$), we find that
two-layer networks learn about the different directions in a sequential way, learning first about the mean, then the covariance, and finally the higher-order cumulants. This is an example of the distributional simplicity bias~\citep{ingrosso2022data, refinetti2023neural,
  nestler2023statistical, belrose2024neural}. However, note that the periods of sudden improvement,
where the networks discovers a new direction in the data, are followed by long
plateaus where the test error does not improve. The plateaus notably delay the
learning of the direction $v$ that is carried by the higher-order
cumulants. 
We also see an
``overfitting'' on the censored test sets: as the network discovers a direction
that is not present in the censored data sets, this appears as overfitting on
the censored tests.

\subsection{Correlated latent variables speed up learning non-Gaussian
  directions}

As it stands, the MCM with orthogonal spikes and independent
latent variables is a poor model for real data: neural
networks show long plateaus between learning the Gaussian and non-Gaussian part
of the data on the MCM, but if we train the same network on CIFAR10, we see a smooth decay of the test loss, see \cref{fig:figure1}B. Moreover, testing a network trained on CIFAR10 on a Gaussian
mixture with the means fitted to CIFAR10 (\textcolor{C0}{blue}) and a Gaussian mixture with the means and covariance
fitted on CIFAR10 (orange) shows that the network goes smoothly and quickly from Gaussian to the non-Gaussian part of the data, without a plateau.

A natural idea improve the MCM, i.e.\ to make the loss curves of a neural network trained on the MCM resemble more the dynamics observed on CIFAR10, is to
correlate the covariance and cumulant spikes: instead of choosing them to be orthogonal to each other,
one could give them a finite overlap $u \cdot v = \rho$. However, a detailed analysis of the SGD dynamics in \cref{sec:two-directions} will show that this does \emph{not} speed up learning the
non-Gaussian part of the data; 
instead, the crucial ingredient to speed up learning of non-Gaussian correlations is the correlation
between latent variables. Setting for example
\begin{equation}
  \nu^\mu=\text{sign}(\lambda^\mu),
\end{equation}
we obtain the generalisation dynamics shown in red in \cref{fig:figure1}C: the
plateaus have disappeared, and we get a behaviour that is very close to real
data: a single exponential decay of the test loss.

\subsection{A rigorous analysis of a single neuron quantifies the speed-up of correlated latents}

We can make our experimental observations for two-layer networks rigorous in the simplest model of a neural
network 
a single neuron $f(w,x)=\sigma(w\cdot x)$ trained using online projected stochastic gradient descent (also known as the spherical
perceptron). At each step $t$ of the algorithm, we sample a tuple~$(x_t,y_t)$ from the MCM~\eqref{eq:mixed-cumulant-model} and update the weight $w_t$ according to the following rule:
\begin{equation}\label{eq:onlineSGD}
 w_t=   
\begin{cases}
w_0 \sim \text{Unif}\left(\mathbb S^{d-1}\right) &t=0 \\
\tilde w_t=w_{t-1}-\frac{\delta}d{\nablasph\left(\mathcal L(w,(x_t,y_t)\right)}  &t\ge 1 \\
w_t=\frac{\tilde w_t}{||\tilde w_t||}.
\end{cases}
\end{equation}
Here, $\nablasph$ is the spherical gradient defined by $\nablasph f(w)=(\id -ww^\top)\nabla f(w)$.  We follow \citet{damian2023smoothing} in using the \emph{correlation loss} for our analysis,
\begin{equation} \label{eq:corrloss}
\mathcal L(w,(x,y))=1-yf(w,x).
\end{equation}
In high dimensions, the typical overlap of the weight vector at initialisation and, say, the cumulant spike scales as $w_0 \cdot v \simeq d^{-1/2}$. We will say we the perceptron has learnt the direction $v$ if we have ``weakly recovered'' the spike, i.e.\ when the overlap $\alpha_v \equiv w \cdot v ~\sim O(1)$. This transition from diminishing to macroscopic overlap marks the exit of the search phase of stochastic gradient descent, and it often requires most of the runtime of online SGD~\citep{benarous2021online}. 

In this setup, we can give a precise characterisation of the sample complexity of learning a single spike (either in the covariance or in the cumulant) and of learning in the presence of two spikes with independent or correlated latent variables. By looking at stochastic gradient descent, we can also make quantitative statements on the optimal learning rates to achieve weak recovery quickly. For simplicity, our perceptron analysis does not  consider the spike in the mean, i.e. $\beta_m=0$ throughout. This assumption is mainly to enhance the mathematical tractability of the model and we expect most of the following to hold also in the case $\beta_m\ne 0$. Our main theoretical results are then as follows:

\paragraph{Learning a single direction:} If only the covariance \emph{or} the cumulant are spiked, i.e.\ either $\beta_u = 0$ or $\beta_v = 0$, the analysis of \citet{benarous2021online} applies directly and we find that projected online SGD requires $n \gtrsim d \log^2 d$ samples to learn the covariance spike $u$, but $n\gtrsim d^3$ samples to learn the cumulant spike; see \cref{prop:spiked_wish} and \cref{prop:spike_cumulant} for the precise scaling of the learning rate $\delta$. This result establishes that in isolation, learning from the higher-order cumulant has a much higher sample complexity than learning from the covariance.

\paragraph{Independent latent variables:} In the mixed cumulant model with spikes in the covariance and the HOCs, \cref{prop:negative_result} shows that if $n\le d^3$, it is impossible to have weak recovery of the cumulant spike for any learning rate $\delta_d=o\left(\nicefrac{1}{d}\right)$ \emph{if} the latent variables $\lambda$ and $\nu$ are independent -- hence learning from HOCs remains hard. For larger learning rates $\nicefrac 1d\le \delta_d=o(1)$, the SGD noise becomes dominant after $\nicefrac {d}{\delta^2}$ steps and our analysis works only up to that horizon; we discuss this  in \cref{sec:two-directions}.

\paragraph{Correlated latent variables:} If instead both spikes are present and their latent variables have a positive correlation $\E[\lambda^\mu\nu^\mu]>0$ (fixed, independent of $d$),  proposition \ref{prop:positive_result} shows that $n \gtrsim d \log^2 d$ samples are \emph{sufficient to weakly recover both spikes} with the optimal learning rate $\delta\approx\nicefrac{1}{\log d}$. Moreover, even for sub-optimal learning rates $\delta=o\left(\nicefrac{1}{d}\right)$, the time to reach weak recovery of the cumulant spike is $d^2\log^2 d$, which is still faster that in the uncorrelated case.
\Cref{prop:positive_result} also shows that the speed-up in terms of sample complexity required for weak recovery happens even when the amount of correlation is small compared to the signal carried by the cumulant spike. In other words, it does not affect the minimum landscape of the loss: if the global minimum is close to the cumulant spike, introducing correlations between the latents will not move that minimum to the covariance spike.

We give a full summary of the known results and our contributions in \cref{tab:results}. In the following, we give precise statements of our theorems in \cref{sec:odes} and discuss our results in the context of the wider literature, and the recently discussed ``staircase phenonmenon'', in \cref{sec:discussion}.

\section{Rigorous analysis of the perceptron: the non-Gaussian information exponent}%
\label{sec:odes}

We now present a detailed analysis of the simplest model where correlated latents speed up learning from higher-order cumulants in the data, the spherical perceptron. Before stating the precise theorems, we present the main ingredients of the analysis informally. 

The key idea of our analysis is borrowed from \citet{benarous2021online}: during the \emph{search phase} of SGD, while the overlaps $\alpha_{u}=u\cdot w,\alpha_v=v\cdot w $ of the weight $w$ with the covariance and cumulant spikes are small, i.e.\ $o(1)$ with respect to $d$, the dynamics of spherical SGD is driven by the low-order terms of the polynomial expansion of the \emph{population loss} $\mathcal L(w)=\E[\mathcal L(w,(x,y)]$. In our case of a mixture classification task, it is useful to rewrite the expectation using the \emph{likelihood ratio} between the isotropic and the planted distributions $L(x):=\nicefrac{\text{d}\Q_{\text{plant}}}{\text{d}\Q_0}(x)$ such that all averages are taken with respect to the simple, isotropic Gaussian distribution:
\begin{align}
\mathcal L(w)&=1+\frac12\E_{\Q_0}[\sigma(w\cdot x)]-\frac12 \E_{\Q_0}\left[L(x) \sigma(w\cdot x)\right].
\end{align}
We can then expand the loss in Hermite polynomials, which form an orthonormal basis w.r.t.\ the standard Gaussian distribution, and prove that the loss depends only on $\alpha_u,\alpha_v$ and can be expanded in the following way:
\begin{equation} \label{eq:popu_loss_expansion}
    \mathcal L(w)=\ell(\alpha_{u},\alpha_v)=\sum_{i,j=0}^\infty c^L_{ij}c^\sigma_{i+j}\alpha_u^i\alpha_v^j,
\end{equation} 
see \cref{lemma:populoss_formula} in the appendix.
The degree of the lowest order term in this expansion is called the \emph{information exponent}~$k$ of the loss and it rigorously determines the duration of the search phase before SGD \emph{weakly recovers} the key directions~\citep{benarous2021online, dandi2023learning}, in the sense that for any time-dependent overlap $\alpha(t)$, there exists an $\eta>0$ such that defining $\tau_{\eta}:=\min\left\{t\ge0 \big| \ |\alpha(t)|\ge \eta\right\}$, we have
 \begin{equation}
 \label{eq:weak-recovery}
     \lim_{d\to \infty} \mathbb P\left(\tau_\eta\le n\right)=1.
 \end{equation}

Since the time in online SGD is equivalent to the number of samples, the information exponent governs the sample complexity for weak recovery. Here, we are interested in finding out how the correlation of the latent variables changes the information exponent, and which consequences this change has on the duration of search phase of projected SGD as $d\to \infty$.
Considering only the terms that give relevant contribution for the search phase, \cref{eq:popu_loss_expansion} applied to the MCM model with $\beta_m=0$ becomes (see \cref{app:formulas} for more details):
\begin{equation} \label{eq:popu_loss_MCM}
    \ell(\alpha_{u},\alpha_v)=-\left(c_{20}\alpha_u^2+c_{11}\alpha_u\alpha_v+c_{04}\alpha_v^4\right)
\end{equation} 
where $c_{20},c_{04}>0$ as long as $\beta_u,\beta_v>0$, whereas $c_{11}>0$ if and only if the latent variables are positively correlated: $\mathbb E[\lambda \nu]>0$.
Note that switching on this correlation does not change the overall information exponent, which is still 2,  but the \emph{mixed term} $\alpha_u\alpha_v$ strongly impacts the direction $v$, along which $\partial_v\ell=c_{11}\alpha_u+4c_{04}\alpha_v^3$  has degree $3$ if $c_{11}=0$ whereas it has degree 1 in case of positive correlation. This means that \emph{positive correlation of latent variables lowers the information exponent along the non-Gaussian direction}.

It is not straightforward to link these changes of the population-loss series expansion to actual changes in SGD dynamics. The randomness of each sample comes into play and it must be estimated thanks to a careful selection of the learning rate $\delta$ (see \cref{prop:negative_result} and \cref{prop:positive_result}). However, at the risk of oversimplification, we could imagine that the dynamic is updated descending the spherical gradient of the population loss, leading to the following  system of ODEs that approximately hold when $\alpha_u,\alpha_v\le \eta$  
\begin{equation} \label{eq:ODEsystMCM}
\begin{cases}
    \dot \alpha_{u}(t)= 2c_{20}\alpha_u+c_{11}\alpha_v+O(\eta^2)\\
     \dot \alpha_{v}(t)= c_{11}\alpha_u+4c_{04}\alpha_{v}^3-2c_{20}\alpha^2_u\alpha_v +O(\eta^4)\\
\end{cases}
\end{equation}
where the last term in the second equation is due to the distorting effect of the spherical gradient.

We can see that the behaviour of $\alpha_u$ is not affected too much by the presence of correlation (the integral lines starting at $\alpha_u(0)\approx d^{-1/2}$ reach order 1 in $t=d\log^2d$). On the contrary the dynamic of $\alpha_v$ changes completely: if $c_{11}>0$, $\alpha_v$ will grow at the same pace as $\alpha_u$, otherwise it will be orders of magnitude slower (there is even the possibility that it will never escape 0, since the term $-2c_{20}\alpha^2_u\alpha_v$ could push it back too fast). Before stating the theorems that make these intuitions precise, we have to state our main assumptions on the activation function:
\begin{assumption}[Assumption on the student activation function $\sigma$]%
\label{assumptions:sigma} We require that the activation function $\sigma\in\mathscr C^1(\R)$ fulfils the following conditions:
\begin{equation}\label{eq:sigma hp}
\begin{aligned}
    \underset{z\sim\mathcal N(0,1)}{\mathbb E}\left[\sigma(z)h_2(z)\right]>0,&
   \underset{z\sim\mathcal N(0,1)}{\mathbb E}\left[\sigma(z)h_4(z)\right]<0,\\
         \sup_{w\in \mathbb S^{d-1}} \E\left[\sigma'(w\cdot x)^{4}\right]\le C,&
            \sup_{w\in \mathbb S^{d-1}} \E\left[\sigma'(w\cdot x)^{8+\iota}\right]\le C,\\
            \mathrm{and} \sum_{k=0}k\underset{z\sim \mathcal N(0,1)}{\E}[h_k(z)\sigma(z)]&<\infty.
\end{aligned}
\end{equation}
Here, $(h_i)_i$ are normalised Hermite polynomials (see \cref{app:hermite}) and $\iota>0$. 
\end{assumption}
Note that smoothed versions of the ReLU activation function satisfy these conditions. We remark that all \cref{eq:sigma hp} are important requirements, activation functions that do not satisfy one or more of these conditions could lead to different dynamics. On the other hand  the assumption $\sigma \in \mathscr C^1(\R)$ is done for convenience and differentiability a.e.\ should be enough; for instance all the simulations of the present work were done with the ReLU activation function.

\subsection{Learning a single direction}%
\label{sec:one-direction}

\begin{table*}[t!]
\renewcommand{\arraystretch}{1.2}
\centering
\begin{tabular}{ c c  c c}
\toprule
Model & Step size & Weak recovery $u$ & Weak recovery $v$ \\ \midrule
Covariance spike only &  \textcolor{orange}{$\delta=O(1)$},  \textcolor{blue}{$\quad\delta \approx\frac{1}{\log d} $} &  \textcolor{orange}{$n\ll d\log d$}, \textcolor{blue}{$n\gg d\log^2 d$} & not present \\ \midrule
Cumulants spike only & \textcolor{orange}{$\delta=o\left(\frac1d\right)$},\textcolor{blue}{$\quad\delta\approx \frac{1}{d} $} & not present & \color{orange}$n\ll d^3\quad$ \color{blue} $n\gg d^3\log^2 d$\color{black} \\ \midrule
 &  \textcolor{blue}{$\delta \approx \frac{1}{\log d} $}& \textcolor{blue}{$ n \gg d\log^2 d$} & \textcolor{orange}{$n\ll d \log^2 d$} \\ \cmidrule(l){2-4} 
\multirow{-2}{*}{\begin{tabular}[c]{@{}c@{}}Two spikes,\\ independent latents\end{tabular}} &   \textcolor{orange}{$\delta=o\left(\frac1d\right)$},\textcolor{blue}{\quad$\delta\approx \frac{1}{d} $} & \textcolor{blue}{$ n\gg d^2\log d$} & \textcolor{orange}{$n\ll d^3$} \\ \midrule
 &  \textcolor{blue}{$\delta \approx \frac{1}{\log d} $}& \textcolor{blue}{$ n \gg d\log^2 d$} & \textcolor{blue}{$ n \gg d\log^2 d$} \\ \cmidrule(l){2-4} 
\multirow{-2}{*}{\begin{tabular}[c]{@{}c@{}}Two spikes,\\ correlated latents\end{tabular}} &  \textcolor{blue}{$\delta\approx \frac{1}{d} $} & \textcolor{blue}{$ n\gg d^2\log d$} & \textcolor{blue}{$ n \gg d^2\log d$} \\ \bottomrule
\end{tabular}%
\caption{\label{tab:results} \textbf{Summary of positive and negative results on learning from higher-order cumulants.} We summarise the sample complexities required to reach weak recovery of the covariance spike $u$ and the cumulant spike $v$ in the different settings discussed in \cref{sec:odes}.  \textcolor{orange}{Negative results} mean that weak recovery is impossible below these sample complexities at the given learning rates.  \textcolor{blue}{Positive results} guarantee weak recovery with at least as many samples as stated. The first two lines are direct applications of results from \citet{benarous2021online} for data models with a single spike. The mixed cumulant model with independent latent variables is a direct corollary of these results. 
The notation $\delta\approx f(d)$ for the positive results means that $\delta =o(f(d))$, but the best sample complexity is achieved when $\delta$ is as close as possible to its bound. Note that for independent latents with large learning rate $\delta \approx \nicefrac{1}{\log d}$, our techniques only  allow studying a time horizons up to $t\approx d\log^2 d$, after which the randomness of the SGD updates becomes too large to be controlled, see \cref{sec:two-directions} for more details.}
\end{table*}

To establish the baselines for weak recovery in single index models, where inputs carry only a covariance spike or a spike in the higher-order cumulants, we apply the results of \citet{benarous2021online} directly to find the following SGD learning timescales:

\begin{proposition}[Covariance spike only] Considering only the spike in the covariance the generative distribution is $y\sim \text{Rademacher}\left(\nicefrac1{2}\right)$,
\begin{align*}
    y=1& \ \Rightarrow \ x^\mu=\sqrt \beta_u\lambda^\mu u +z^\mu, \quad& \lambda^\mu\sim \mathcal N(0,1)\\
     y=-1& \ \Rightarrow \ x^{\mu}= z^\mu, \quad& z^\mu \sim \mathcal N(0,\mathbbm 1_d)
\end{align*}
A spherical perceptron that satisfies \cref{assumptions:sigma} and is trained on the correlation loss~\eqref{eq:corrloss} using online SGD~\eqref{eq:onlineSGD} has the following result concerning the overlap with the hidden direction $\alpha_{u,t}:=u\cdot w_t$:
\begin{itemize}\item if $\frac{1}{\log^2 d}\ll\delta_d\ll\frac{1}{\log d}$ and 
$n\gg d\log^2 d$ then there is \textbf{strong recovery} and 
\[
|\alpha_{u,t}|\to 1 \qquad \text{in probability and in $\mathscr L_p$ }\forall p\ge 1
\]
\item if $\delta_d=O(1)$ and  $n \le d\log d$ then it is impossible to learn anything:
\[
|\alpha_{u,t}|\to 0 \qquad \text{in probability and in $\mathscr L_p$ }\forall p\ge 1
\]
\end{itemize}
\label{prop:spiked_wish}

\end{proposition}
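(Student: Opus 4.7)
The plan is to apply the framework of \citet{benarous2021online} directly, after reducing the dynamics to an effective one-dimensional problem in the overlap $\alpha_{u,t}:=u\cdot w_t$. Under the isotropic class $y=-1$, $w\cdot x\sim \mathcal N(0,1)$; under the planted class $y=+1$, we can write $w\cdot x=\sqrt{\beta_u}\,\alpha_u\lambda+w\cdot z$ with $\lambda$ and $w\cdot z$ independent standard Gaussians, so $w\cdot x\sim\mathcal N(0,1+\beta_u\alpha_u^2)$. Expanding $\sigma$ in the normalised Hermite basis and using the standard formula for $\E[h_k(Z)]$ when $Z$ is a centred Gaussian of variance $\sigma^2$, the population correlation loss depends on $w$ only through $\alpha_u$ and admits the expansion
\[
\mathcal L(w)=\ell(\alpha_u)=c_0-\tfrac12\sum_{k\ge 2,\,\text{even}}\tilde c_k\,(\beta_u\alpha_u^2)^{k/2},
\]
with $\tilde c_2$ proportional to $\E[\sigma(z)h_2(z)]>0$ by \cref{assumptions:sigma}. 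This identifies the information exponent as $k_\star=2$: the origin is a strict quadratic saddle and weak recovery is the problem of escaping it along $u$.

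Having reduced the problem, I would write the one-step update of $\alpha_{u,t}$ after the spherical projection as drift plus martingale,
\[
\alpha_{u,t+1}-\alpha_{u,t}=\tfrac{\delta}{d}\bigl(C_u\alpha_{u,t}(1-\alpha_{u,t}^2)+R_t\bigr)+(M_{t+1}-M_t),
\]
with $C_u=\tilde c_2\beta_u>0$, remainder $R_t=O(\alpha_{u,t}^3)$ absorbing both the higher Hermite modes of $\ell$ and the $O(\delta/d)$ corrections from renormalising $\tilde w_t$ onto the sphere, and a martingale increment of conditional variance $O(\delta^2/d^2)$ thanks to $\E[\sigma'(w\cdot x)^4]\le C$ in \cref{assumptions:sigma}. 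At initialisation, $|\alpha_{u,0}|\gtrsim d^{-1/2}$ with high probability by concentration on the sphere, and the drift then drives approximately exponential growth $\alpha_{u,t}\asymp \alpha_{u,0}\exp(C_u\delta t/d)$ during the search phase. A Freedman-type inequality for $M_t$, combined with a geometric-grid union bound on the trajectory, shows that the accumulated noise remains subdominant to $\alpha_{u,t}$ as long as $\delta=o(1/\log d)$ keeps the remainder $R_t$ negligible. Choosing $\delta\approx 1/\log d$ then gives weak recovery at $t\approx (d\log d)/(C_u\delta)\asymp d\log^2 d$, and a separate deterministic-flow argument on the sphere, using that $\ell$ is strictly decreasing in $|\alpha_u|$ on $(0,1]$, upgrades weak to strong recovery $|\alpha_{u,t}|\to 1$.

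The impossibility statement for $n\le d\log d$ is the matching lower bound: over such a horizon the multiplicative drift factor is at most $\exp(C\delta\log d)=d^{o(1)}$ for any $\delta=o(1)$, so the drift-driven component of $\alpha_{u,t}$ stays at $d^{-1/2+o(1)}=o(1)$, while the martingale has total variance $O(\delta^2\log d/d)=o(1)$; neither mechanism can push $|\alpha_{u,t}|$ out of a shrinking neighbourhood of zero, and the upgrade to $\mathscr L_p$ convergence follows from the uniform moment bounds in \cref{assumptions:sigma}. The main technical obstacle is the martingale concentration during the exponential growth window, because the one-step variance and the effective drift both depend on $\alpha_{u,t}$ along the trajectory and one must control them simultaneously over a geometric time grid; this is resolved exactly as in Proposition~2.1 and Theorem~1.3 of \citet{benarous2021online}, whose hypotheses are verified in our setting because the information exponent is $k_\star=2$ and \cref{assumptions:sigma} supplies the required sub-polynomial moment bounds on $\sigma'$.
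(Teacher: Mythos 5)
Your proposal is correct and follows essentially the same route as the paper: the population loss is shown to depend only on $\alpha_u$ with information exponent $k=2$ and positive leading coefficient (via \cref{assumptions:sigma}), the moment conditions on $\sigma'$ supply the required bounds on the sample-wise error, and the positive and negative statements then follow by invoking Theorems~1.3 and~1.4 of \citet{benarous2021online}. The drift--martingale mechanics you sketch are the internals of those theorems rather than new content, so nothing further is needed.
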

\begin{proposition}[Cumulant spike only]%
\label{prop:spike_cumulant} Considering only the spike in the cumulants, letting $(y^\mu)_\mu,(\nu^{\mu})_\mu$ be i.i.d. Rademacher$(1/2)$, $(z^\mu)_\mu$ be i.i.d $N(0,\mathbbm 1_d) $ and $S$ as in \eqref{eq:whitening}, the generative distribution is:
\begin{align*}
    y^\mu=1& \ \Rightarrow \ x^\mu=S\left(\sqrt \beta_v\nu^\mu v +z^\mu\right)\\
     y^\mu=-1& \ \Rightarrow \ x^{\mu}= z^\mu,
\end{align*}
A spherical perceptron that satisfies \cref{assumptions:sigma} and is trained on the correlation loss~\eqref{eq:corrloss} using online SGD~\eqref{eq:onlineSGD} has the following result concerning the overlap with the hidden direction $\alpha_{v,t}:=u\cdot v_t$:
\begin{itemize}\item if $\frac{1}{d^2\log^2 d}\ll\delta_d\ll\frac{1}{d\log d}$ and 
$n\gg d^3\log^2 d$ then there is \textbf{weak recovery} of the cumulant spike in the sense of \cref{eq:weak-recovery};
\item if $\delta_d=o(\frac {1}{d} )$ and  $n\ll d^3$ then it is impossible to learn anything:
\[
|\alpha_{v,t}|\to 0 \qquad \text{in probability and in $\mathscr L_p$ }\forall p\ge 1
\]
\end{itemize}
\end{proposition}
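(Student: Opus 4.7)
The proof has two main components: first, I reduce the cumulant-spike-only model to a single-index problem with a specific information exponent; second, I apply the online-SGD framework of \citet{benarous2021online} to that reduction.

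\emph{Identifying the information exponent.} Since only the cumulant spike is present, the likelihood ratio $L(x)=\mathrm{d}\Q_{\text{plant}}/\mathrm{d}\Q_0$ depends on $x$ only through $v\cdot x$, so the population loss reduces to a function $\ell(\alpha_v)$ of the single overlap $\alpha_v=v\cdot w$. I would expand $\ell$ in Hermite polynomials using \cref{lemma:populoss_formula} and identify the smallest $j\ge 1$ with non-vanishing coefficient in $\sum_j c^L_{0j}c^\sigma_j \alpha_v^j$. The coefficient at $j=1$ vanishes because both $\Q_0$ and $\Q_{\text{plant}}$ have zero mean; the coefficient at $j=2$ vanishes precisely because the whitening matrix $S$ in \eqref{eq:whitening} is designed so that $\mathrm{Cov}_{\Q_{\text{plant}}}(x)=\id$; the coefficient at $j=3$ vanishes by the Rademacher symmetry of $\nu$. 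The first surviving term appears at $j=4$, matching the expression \eqref{eq:popu_loss_MCM} with $c_{04}>0$ (using $c^\sigma_4<0$ guaranteed by \cref{assumptions:sigma} and the sign convention of \eqref{eq:popu_loss_expansion}). The information exponent of the problem is therefore $k=4$.

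\emph{Applying the BGJ framework.} Given $k=4$, both claims of the proposition are the canonical online-SGD scaling predictions: weak recovery in $n\gg d^{k-1}\log^2 d=d^3\log^2 d$ steps with nearly-optimal step size $\delta\approx d^{-(k/2-1)}/\log d = 1/(d\log d)$, and impossibility when $\delta=o(1/d)$ and $n\ll d^{k-1}=d^3$. To port the positive and negative single-index-model theorems of \citet{benarous2021online} into our setup, I would check three technical inputs: (i)~the spherical drift of $\alpha_v$ during the search phase is $4c_{04}\delta\alpha_v^3$ up to higher-order corrections coming from the projection onto $\mathbb S^{d-1}$, matching the canonical cubic drift for $k=4$; (ii)~the per-step SGD noise, projected onto $v$, has variance bounded by a $d$-independent constant, which follows from the moment conditions $\E[\sigma'(w\cdot x)^{4}]\le C$ and $\E[\sigma'(w\cdot x)^{8+\iota}]\le C$ in \cref{assumptions:sigma} together with the fact that $v\cdot x$ has $d$-independent moments under both class-conditional distributions; (iii)~the residual higher-order drift contributions are summable thanks to $\sum_k k\,\E[h_k(z)\sigma(z)]<\infty$. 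With these three inputs, the stochastic-differential-inequality machinery of BGJ applies essentially verbatim: the positive result comes from their drift-dominated analysis in the search phase, and the negative result from their noise-dominated impossibility argument for $\delta=o(d^{-1})$.

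\emph{Main obstacle.} The primary subtlety beyond a black-box application is that our data is a mixture of a Gaussian branch ($y=-1$) and a genuinely non-Gaussian branch ($y=+1$), rather than the Gaussian-plus-rank-one signal model of \citet{benarous2021online}. I would handle this by splitting each SGD increment into its $y=\pm 1$ contributions: the $y=-1$ branch is exactly a Gaussian single-index increment, and the $y=+1$ branch is a $|\nu|=1$-bounded perturbation of it, whose fourth-moment tails are controlled by the same assumptions on $\sigma'$ evaluated against the bounded-latent conditional density. The whitening $S$ is convenient here: it enforces $\mathrm{Cov}(x)=\id$ under both classes, so linear statistics like $v\cdot x$ inherit Gaussian-like tail behaviour uniformly in $d$. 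The only delicate step I expect is verifying that no spurious descent direction orthogonal to $v$ is created by the non-Gaussianity; this ultimately follows from rotational symmetry in the subspace orthogonal to $v$, which persists because $z$ is isotropic and $\nu$ multiplies only $v$.
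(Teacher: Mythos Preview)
Your proposal is correct and follows essentially the same approach as the paper: identify the information exponent $k=4$ via the Hermite expansion of the population loss, verify the moment bounds (Assumption~B in \citet{benarous2021online}) via \cref{assumptionB}, and apply their Theorems~1.3 and~1.4 with $k=4$. The paper is slightly more economical on your ``main obstacle'': rather than splitting the SGD increments by label, it observes that the likelihood-ratio formulation already reduces the population loss to a single-index function $\ell(\alpha_v)$, so the only additional check beyond the moment bounds is the local monotonicity assumption $A_\rho$ (monotonicity of $\ell$ on $(0,\rho)$ for small $\rho$), which is read off directly from \eqref{eq:sp_cum_popu_loss}.
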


Note that the thresholds found in \cref{prop:spike_cumulant} coincide with the ones found for tensor-PCA for a order 4 tensor (see proposition 2.8 in \citep{benarous2021online}), which is exactly the order of the first non trivial cumulant of $\Q_{\text{plant}}$ in the spiked-cumulant model. We will also see what happens in case of larger step size than what considered in \cref{prop:spike_cumulant}.

\subsection{Learning two directions}%
\label{sec:two-directions}

The following two propositions apply to a perceptron trained on the MCM model of \cref{eq:mixed-cumulant-model} with $\beta_m=0$, $\nu^\mu\sim \mathrm{Radem}(1/2)$. We first state a negative result: in the case of \emph{independent} latent variables, the cumulant direction cannot be learned faster than in the one dimensional case (\cref{prop:spike_cumulant}). 
\begin{proposition} \label{prop:negative_result}
Consider a spherical perceptron that satisfies \cref{assumptions:sigma} and is trained on the MCM~\eqref{eq:mixed-cumulant-model} with correlation loss~\eqref{eq:corrloss} using online SGD~\eqref{eq:onlineSGD}, with $\lambda^\mu$ independent of $\nu^\mu$. Let $\delta_d=o(1)$ be the step size. As long as $n\ll \min\left(\frac {d}{\delta_d^2},d^3\right)$, we have that
    \begin{equation*} \label{eq:negative_result}
    \lim_{d\to \infty} \sup_{t\le n} \left |\alpha_{v,t}\right|=0
    \end{equation*}
     where the limit holds in $\mathscr L^p$ for every $p\ge1$.
\end{proposition}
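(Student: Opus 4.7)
I plan to adapt the information-exponent lower-bound strategy of \citet{benarous2021online} to the two-spike setting, showing that independence of the latents eliminates the dangerous linear-in-$\alpha_v$ drift and reduces the dynamics along $v$ to an information-exponent-$4$ problem. The key algebraic observation is that, when $\lambda^\mu$ and $\nu^\mu$ are independent, the cross-coefficient $c_{11}$ in the population-loss expansion \cref{eq:popu_loss_MCM} vanishes. Indeed, by \cref{lemma:populoss_formula} the likelihood ratio $L(x)$ factorises along (approximately) orthogonal spikes as $L(x)=L_u(u\cdot x)\,L_v(v\cdot x)$, and the two first Hermite coefficients $c_1^u=\E_{\Q_{\text{plant}}}[u\cdot x]$ and $c_1^v=\E_{\Q_{\text{plant}}}[v\cdot x]$ both vanish since $\lambda$ and $\nu$ have mean zero. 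The population drift on $\alpha_v$ therefore reduces to $4c_{04}\alpha_v^3-2c_{20}\alpha_u^2\alpha_v+O(\alpha^5)$, with no linear-in-$\alpha_v$ term.

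I would then expand the one-step SGD update as $\alpha_{v,t+1}^2-\alpha_{v,t}^2=D_t+N_t$, with $\E[N_t\mid\mathcal F_t]=0$, by Taylor-expanding the normalisation $\|\tilde w_{t+1}\|^{-1}$. The drift obeys $|D_t|\le C(\delta/d)\alpha_{v,t}^4+C\delta^2/d^2$ after dropping the favourably-signed $-c_{20}(\delta/d)\alpha_u^2\alpha_v^2$ contribution from the spherical correction, while the conditional second moment satisfies $\E[N_t^2\mid\mathcal F_t]\le C(\delta/d)^2\alpha_{v,t}^2$, the constant $C$ depending on the moment bound $\E[\sigma'(w\cdot x)^4]\le C$ from \cref{assumptions:sigma} together with a standard moment estimate on $(v\cdot x)^2$. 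I then run a bootstrap/stopping-time argument: set $\tau_K:=\inf\{t\ge 0: \alpha_{v,t}^2\ge K/d\}$ for a large constant $K$. On $\{t<\tau_K\}$ the accumulated drift satisfies $\sum_{s<t} D_s \le Cn(\delta/d)(K/d)^2+Cn\delta^2/d^2=o(1/d)$ provided $n\ll d^3$ and $n\ll d/\delta^2$, while Doob's $\mathscr L^2$-inequality yields $\E\bigl[\sup_{t\le n\wedge\tau_K}|N_t|^2\bigr]\le 4C\,n(\delta/d)^2(K/d)=o(1/d^2)$ under the same conditions. Combining these with $\alpha_{v,0}^2=O(\log d/d)$ closes the bootstrap: $\mathbb P(\tau_K\le n)\to 0$, so that $\sup_{t\le n}\alpha_{v,t}^2\to 0$ in probability. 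The upgrade to convergence in $\mathscr L^p$ for every $p\ge 1$ then follows from the deterministic bound $|\alpha_{v,t}|\le 1$ and dominated convergence.

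The main obstacle is the joint self-consistency of the drift and noise controls, both of which rely on the bootstrap bound $\alpha_{v,t}^2\le K/d$: $K$ must be chosen large enough to dominate $\alpha_{v,0}^2\sim\log d/d$ but small enough that the drift ($\propto \alpha_v^4$) and the noise variance ($\propto \alpha_v^2$) remain sub-leading. The ceiling $n\ll d/\delta^2$ appears precisely because beyond this horizon the accumulated martingale fluctuations of $\alpha_v^2$ exceed the bootstrap scale $1/d$; pushing past it would require either a refined concentration estimate exploiting the stabilising term $-\alpha_u^2\alpha_v^2$ once $\alpha_u$ has grown to $\Theta(1)$, or a more careful moment analysis separating the noise projections before and after $\alpha_u$ recovery, neither of which is needed for the statement we wish to prove.
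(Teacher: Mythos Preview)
Your proposal is correct and follows essentially the same route as the paper: both arguments hinge on the observation that independence of $\lambda$ and $\nu$ kills the mixed coefficient $c_{11}$, reducing the $v$-direction to an information-exponent-$4$ problem, and then control drift plus martingale noise via Doob's inequality up to the horizon $n\ll d/\delta_d^2$. The only cosmetic difference is that the paper verifies the drift bound $-v\cdot\nabla_{\mathrm{sph}}\mathcal L\le C\alpha_v^3$ and then invokes Theorem~1.4 of \citet{benarous2021online} as a black box, whereas you work with $\alpha_v^2$ and spell out the bootstrap at scale $K/d$ explicitly (with the caveat, which you already flag, that $K$ should be allowed to grow like $\log d$ to absorb the initialisation).
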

The condition $n\ll \nicefrac {d}{\delta_d^2}$ is due to \emph{sample complexity horizon} in online SGD.  As already pointed out by \citet{benarous2021online}, it turns out that after a certain number of steps, when $\nicefrac{n \delta_d^2}{d}$ becomes large, the noise term in the equations dominates the drift term and the dynamics starts to become completely random  (mathematically, Doob's inequality \eqref{eq:mart_estimfirst} fails to provide a useful estimate on the noise after this point).
So our results can prove the absence of weak recovery only up to the \emph{sample complexity horizon}; after that our definition of \emph{weak recovery} is not useful anymore: due to the increased randomness, it is possible that the weight vector attains, by pure chance, a finite overlap with the spiked directions, but that would be forgotten very easily, not leading to meaningful learning. 
If instead the learning rate is small enough, $\delta_d=o(\frac 1d)$, then the horizon becomes large enough to include $n\ll d^3$, which is the same regime of \cref{prop:spike_cumulant}.
 
Now we state the positive results: the covariance spike can be weakly recovered in the same sample complexity as the one-dimensional case (\cref{prop:spiked_wish}). However, if the latent variables have positive correlation, the same sample complexity is also sufficient to weakly recover the cumulant spike.
\begin{proposition} \label{prop:positive_result} In the setting described, let the total number of samples be $n=\theta_d d$, with $\theta_d\gtrsim \log^2 d$ and growing at most polynomially in $d$. The step size $(\delta_d)_{d}$ is chosen to satisfy:
 \begin{equation} \label{eq:hypothesis_on_delta}
 \frac{1}{\theta_d}\ll\delta_d\ll\frac{1}{\sqrt{\theta_d}}
\end{equation}
Projected SGD reaches \textbf{weak recovery} of the covariance spike in the sense of \cref{eq:weak-recovery} in a time $\tau_u \le n$. 
Moreover, if the latent variables have positive correlation:~$\mathbb E[\lambda \nu]>0$, conditioning on having matching signs at initialisation: $\alpha_{u,0}\alpha_{v,0}>0$, \textbf{weak recovery is reached also for the cumulant spike $v$ in a time $\tau_v \le n$}.
 \end{proposition}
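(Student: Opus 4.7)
The plan is to apply the drift--martingale decomposition of \citet{benarous2021online} to the two-dimensional overlap process $(\alpha_{u,t},\alpha_{v,t})$. For $j\in\{u,v\}$ I would write
\begin{equation*}
\alpha_{j,t+1}-\alpha_{j,t}=D_{j,t}+M_{j,t+1},\qquad D_{j,t}:=\E[\alpha_{j,t+1}-\alpha_{j,t}\mid \mathcal F_t],
\end{equation*}
and read the drifts off the Hermite expansion \eqref{eq:popu_loss_MCM} together with the correction coming from the spherical projection. For $|\alpha_u|,|\alpha_v|\le\eta$ this gives, using the coefficient identities of \cref{app:formulas},
\begin{align*}
D_{u,t}&=\frac{\delta_d}{d}\bigl(2c_{20}\alpha_{u,t}+c_{11}\alpha_{v,t}\bigr)+O\!\Bigl(\frac{\delta_d}{d}\eta^2\Bigr),\\
D_{v,t}&=\frac{\delta_d}{d}\bigl(c_{11}\alpha_{u,t}+4c_{04}\alpha_{v,t}^3-2c_{20}\alpha_{u,t}^2\alpha_{v,t}\bigr)+O\!\Bigl(\frac{\delta_d}{d}\eta^4\Bigr),
\end{align*}
with $c_{11}>0$ under the positive-correlation hypothesis.

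First I would analyse the deterministic skeleton, conditioning WLOG on $\alpha_{u,0},\alpha_{v,0}>0$. Both terms of $D_{u,t}$ are then positive, so a Gronwall comparison with $\dot\alpha_u\ge (2c_{20}\delta_d/d)\alpha_u$ shows that $\alpha_u$ escapes from its initial typical size $\alpha_{u,0}\sim d^{-1/2}$ to the fixed level $\eta$ in at most $\tau_u=O((d/\delta_d)\log d)$ SGD steps, which is $\le n$ under $\delta_d\gg 1/\theta_d$ combined with $\theta_d\gtrsim\log^2 d$. Once $|\alpha_u|$ has reached order $\eta$, the linear forcing $c_{11}\alpha_u$ in $D_v$ is bounded below by $c_{11}\eta/2$ and dominates the destabilising cubic term $-2c_{20}\alpha_u^2\alpha_v$ as long as $|\alpha_v|\le \eta$, so $\alpha_v$ grows linearly at rate $\Theta(\delta_d/d)$ and reaches $\eta$ in an additional $\Theta(d/\delta_d)$ steps, still within the sample budget $n=\theta_d d$.

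To lift this deterministic argument to the actual SGD trajectory, I would control each martingale via Doob's maximal inequality, as in~\citep[Prop.~2.1]{benarous2021online}. The moment bounds of \cref{assumptions:sigma} imply $\E[M_{j,t+1}^2\mid\mathcal F_t]\le C\delta_d^2/d^2$, so
\begin{equation*}
\E\!\Bigl[\sup_{t\le n}\bigl(\textstyle\sum_{s\le t}M_{j,s}\bigr)^{\!2}\Bigr]\le C\,n\,\delta_d^2/d^2 = C\theta_d\delta_d^2/d,
\end{equation*}
which is $o(1/d)$ under the constraint $\delta_d\ll 1/\sqrt{\theta_d}$. The uniform noise is therefore $o(d^{-1/2})$, strictly smaller than the drift-driven exponential growth of $\alpha_u$ and the subsequent linear growth of $\alpha_v$, so the exit-time estimates above survive with probability tending to one.

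The main obstacle will be to propagate the matched-sign hypothesis throughout the search phase, since the estimates above hold only on the event that neither overlap has changed sign. I would handle this with a stopping-time/bootstrap argument: define $\tau^\star$ as the first time one of the overlaps flips sign or crosses $\eta$, couple the trajectory to the drift ODE on $[0,\tau^\star]$, and use the Doob bound to show that sign flips before the $\eta$-crossing occur with probability $o(1)$, then iterate. A secondary subtlety is that the $O(\eta^2)$ and $O(\eta^4)$ Hermite remainders in $D_u,D_v$ must be kept uniformly subdominant along the trajectory; this is achieved by fixing $\eta$ small but independent of $d$ and exploiting that these remainders vanish faster than the leading terms as $\eta\to 0$.
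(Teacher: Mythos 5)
Your overall machinery matches the paper's: a drift--martingale decomposition of the overlaps, drift read off the Hermite expansion of the population gradient, Doob's maximal inequality giving uniform noise $o(d^{-1/2})$ under $\delta_d \ll 1/\sqrt{\theta_d}$, a discrete Gronwall bound for the exponential escape of $\alpha_u$, and a sign-flip reduction at initialisation. Where you genuinely diverge is in how $\alpha_v$ escapes. You argue sequentially: wait until $\alpha_u$ has saturated at level $\eta$, then treat $c_{11}\alpha_u \ge c_{11}\eta/2$ as a constant forcing so that $\alpha_v$ grows \emph{linearly} over an extra $\Theta(d/\delta_d)$ steps. The paper instead shows that $\alpha_v$ grows \emph{exponentially from the very start}, riding on the cross term $c_{11}\alpha_{u,j}$ inside the Gronwall sum: it splits into the cases $\alpha_v \le \alpha_u$ and $\alpha_v \ge \alpha_u$ (precisely because the $v$-drift has no linear term in $\alpha_v$ against which to absorb the normalisation-induced contraction $\delta^2\Theta_t\alpha_{v,t}/d$), obtains $\alpha_{v,t} \ge \tfrac{\tilde\gamma}{2\sqrt d}(1+\tfrac{\delta\min(c_{20},c_{11})}{8d})^t$, and finally uses a ``maximum progress'' bound to show that $\alpha_v$ reaches a (smaller) threshold $\eta_v = C\eta_u^2$ \emph{before} $\alpha_u$ exits the region $K_\eta$. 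The paper's route buys two things your route lacks: it never needs the trajectory after $\alpha_u$ first crosses $\eta$, and it explicitly neutralises the contraction term that the normalisation step injects into the $v$-equation.

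The concrete soft spot in your version is the phase transition. Your second phase requires $\alpha_u$ to \emph{persist} at order $\eta$ for an additional $\Theta(d/\delta_d)$ steps after first hitting $\eta$, but all of your drift estimates (and the control of the Hermite remainders) are only established inside $K_\eta$; once $\alpha_u$ leaves that set you have no lower bound on the drift and no guarantee that $\alpha_u$ does not fall back, so the constant forcing $c_{11}\eta/2$ is not justified over the whole second phase. Relatedly, your remainders $O(\tfrac{\delta}{d}\eta^2)$ and $O(\tfrac{\delta}{d}\eta^4)$ must be read as \emph{multiplicative} corrections to the leading monomials (as in $c_{11}\alpha_u(1+o(\eta))$); taken as additive errors they would swamp the forcing $c_{11}\alpha_u \sim d^{-1/2}$ in the early phase. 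Both issues are fixable, most economically by adopting the paper's device: show that $\alpha_u$ needs $\gg d\log d$ steps to climb from $\eta/2$ to $\eta$ (a Doob bound on the maximal displacement), and that during this window the accumulated forcing already lifts $\alpha_v$ to $C\eta^2$, so weak recovery of $v$ is secured at a reduced threshold without ever leaving $K_\eta$.
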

The initialisation assumption $\alpha_{u,0}\alpha_{v,0}>0$ means that the second part of \cref{prop:positive_result} can be applied on half of the runs on average. This requirement could be fundamental: in case of mismatched initialisation, the correlation of the latent variables would push both $\alpha_{u,t}$ and $\alpha_{v,t}$ towards 0, slowing down the process. Note that the dependence of SGD dynamics on initialisation is a recurring phenomenon that arises with complex tasks, it is for example a well-known effect when training a neural network on inputs on a XOR-like Gaussian mixture task~\citep{refinetti2021classifying, benarous2022high}. It is likely that this initialisation problem is specifically due to having a single neuron network. In a two-layer neural network, overparametrisation usually helps since it is sufficient that there is a neuron with the right initialisation signs to drive the learning in the right direction~\cite{refinetti2021classifying}. For instance, the simulations we performed for \cref{fig:figure1} with wide two-layer networks did not exhibit initialisation problems.

 \begin{remark}\label{remark:step_size} Optimising in $\theta$ in \cref{prop:positive_result}, we get that if $\frac{ 1}{\log^2 d}\ll\delta_d\ll \frac{1}{\log d}$, then $n\gg d\log^2 d $ guarantees to have weak recovery.   
         This optimal scaling of $\delta_d$ does not coincide with the best learning rate scaling for \cref{prop:negative_result}, since the sample complexity horizon is very short.  However, by taking the slightly sub-optimal learning rate $\delta_d=\frac{1}{d\log d}$, we get that on one hand \cref{prop:negative_result} applies and we can infer that, with independent latent variables, it takes at least~$d^3$ samples/steps to learn direction $v$. On the other hand, by \cref{prop:positive_result} we know that when the latent variables have (even a small) correlation, in $n\approx d^2\log^2 d$ projected SGD attains weak recovery of both spikes.
 \end{remark}

This completes our analysis of the search phase of SGD, which established rigorously how correlated latent variables speed up the weak recovery of the cumulant spike. We finally note that this analysis can only be applied to the \emph{search phase} of SGD. The expansion in \cref{eq:popu_loss_expansion} breaks down as soon as $\alpha_u, \alpha_v$ become macroscopic, since there are infinitely many terms that are not negligible along the non-Gaussian direction $v$ in that case. Hence, even though the correlation between latent variables is key in the early stages of learning, it can become a sub-leading contribution in the later stages of SGD where the direction of weight updates is greatly influenced by terms due to HOCs. A thorough analysis of the descent phase after weak recovery will require different techniques: the SGD dynamical system involving the complete expression of the population loss should be studied, 
we leave this for future work. 

However, the following simple example highlights the richness of the dynamics in this regime. Consider a sequence $(\beta_u^m,\beta_v^m)$ such that $(\beta_u^0,\beta_v^0)=(1,0)$ and $\lim_{m\to \infty}(\beta_u^m,\beta_v^m)=(0,1)$. The global minimum of the population loss will move continuously from $w=u$ to~$w=v$. However, looking at the dynamics of \cref{eq:ODEsystMCM}, the change will be only at the level of the coefficients, and as long as none of them vanishes, the dynamics will be qualitatively the same. So changing the signal-to-noise ratios in this regime will affect only the the descent phase of the problem.
 
\section{Discussion: hierarchies of learning with neural networks}%
\label{sec:discussion}

\subsection{Learning functions of increasing complexity and the staircase phenomenon}

There are several well-known hierarchies that characterise learning in neural
networks. In a seminal contribution, \citet{saad1995exact, saad1995online}
characterised a ``specialisation'' transition where two-layer neural networks 
with a few hidden neurons go from performing like an effective (generalised)
linear model to a non-linear model during training with online SGD. Notably, this transition occurs after a long plateau in the generalisation
error. A similar learning of functions of increasing complexity was shown
experimentally in convolutional neural networks by~\citet{kalimeris2019sgd}. 

More recently, \citet{abbe2021staircase, abbe2022merged} studied
the problem of learning a target function over binary inputs that depend only on
a small number of coordinates. They showed that two-layer neural networks can
learn $k$-sparse functions (i.e.\ those functions that depend only on $k$
coordinates) with sample complexity $n \gtrsim d$, rather than the
$n \gtrsim d^k$ sample complexity of linear methods operating on a fixed feature
space, if the target functions fulfil a (merged) \emph{staircase}
property. \citet{abbe2023sgd} extended this analysis to study the saddle-to-saddle dynamics of two-layer neural networks for specific target link functions. Similar saddle-to-saddle dynamics have been described by \citet{jacot2021saddle} in linear networks and by \citet{boursier2022gradient} for two-layer ReLU networks.

A detailed description of the staircase phenomenon when learning a multi-index
target function over Gaussian inputs was given by
\citet{dandi2023learning}, who studied feature learning via a few steps of single-pass, large-batch, gradient descent\footnote{Note that a single step of gradient descent was studied in a similar setting by
  \citet{ba2022high, damian2022neural} without analysing the
  staircase phenomenon.}. Their analysis showed how subsequent steps of gradient descent allow for learning new perpendicular directions of the target function if those directions
are linearly connected to the previously learned directions. \citet{bietti2023learning} also consider multi-index target function
over Gaussian inputs, but instead look at the gradient flow for two-layer neural networks and provide a complete picture of the timescales of the ensuing saddle-to-saddle dynamics. \citet{berthier2023learning} analyse the
gradient flow of two-layer neural networks when both layers are trained
simultaneously. They perform a perturbative expansion in the (small) learning
rate of the second layer and find the timescales over which the neural network learns the Hermite coefficients of the target function sequentially. 

\subsubsection{Relation between the mixed cumulant and teacher-student models}%
\label{sec:related-teacher-student}

\begin{figure*}[t!]
  \centering
  \includegraphics[trim=0 250 0 460,clip,width=\linewidth]{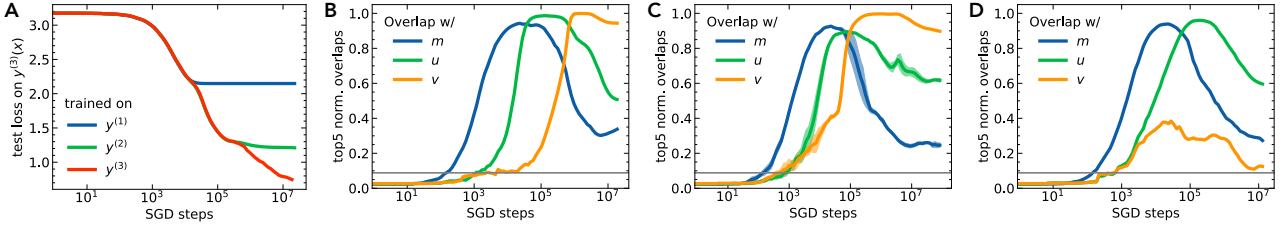}
  \caption{\label{fig:teacher-student} \textbf{Staircases in
      the teacher-student setup.} \textbf{A} Test accuracy of the same two-layer neural networks as in \cref{fig:figure1} evaluated on the degree-4 target function $y^*(x)$~\eqref{eq:teacher} during training on the target functions $y^{(1)}(x) = h_1(m \cdot x )$ (blue), $y^{(2)}(x) = h_1(m \cdot x) + h_2(u \cdot x)$ (green), and the teacher function \cref{eq:teacher} (red). Inputs are drawn from the standard multivariate Gaussian distribution. \textbf{B-D} We show the the average of the top-5 largest normalised overlaps $w_k \cdot u $ of the weights of the $k$th hidden neuron $w_k$ and the three directions that need to be learnt for three different target functions: the teacher function in \cref{eq:teacher} (\textbf{B}), the same teacher with inputs that have a covariance $\id + u v^\top+vu^{\top}$ (\textbf{C}), and a teacher with mixed terms, \cref{eq:teacher-mixed} \textbf{(D)}. The dashed black line is at $d^{-1/2}$, the threshold for weak recovery. \emph{Parameters:} Simulation parameters as in \cref{fig:figure1}: $d=128, m=512$ hidden neurons, ReLU activation function. Full details in \cref{app:figure-details}.}
\end{figure*}

Given these results, it is natural to ask how the staircases in teacher-student
setups relate to the hierarchical learning in the mixed-cumulant model. A
teacher model with three ``spikes'' (or teacher weight vectors) where each spike
is learnt with a different sample complexity, analoguously to the uncorrelated MCM, is the function
\begin{equation}
  \label{eq:teacher}
  y^*(x) = h_1( m\cdot x ) + h_2( u\cdot x
  ) + h_4(v\cdot x),
\end{equation}
where $h_k$ is the $k$th Hermite polynomial (see \cref{app:hermite} for details) and the inputs
$x$ are drawn i.i.d.\ from the standard Gaussian distribution with identity
covariance. In the language of \citet{benarous2021online}, a single-index model
with activation $h_k$ has information exponent $k$, so taken individually, the three target
functions in \cref{eq:teacher} with directions $m, u$ and $v$ would be learnt with $n \gtrsim d, d \log^2 d$, and~$d^3$ samples by a spherical perceptron using online SGD, exactly as in our mixed cumulant model with independent latent variables. We show the sequential learning of 
the different directions of such a teacher for a two-layer neural network in \cref{fig:teacher-student}A. In particular, we plot the test error with respect to the teacher $y^*(x)$~\eqref{eq:teacher} for a two-layer network trained directly on the teacher (red), and for two-layer networks trained on only the first / the first two Hermite polynomials of \cref{fig:teacher-student} (blue and green, respectively). Their performance suggests that the three directions $m, v, u$ are learnt sequentially, which is further supported by showing the maximum normalised overlap of the hidden neurons with the different spikes shown in \cref{fig:teacher-student}B. 

How can we speed up
learning of the direction $u$ with information exponent $4$ in a way that is analogous to the correlated latents? The latent variables of the MCM correspond to the pre-activations of the target function in the sense that both are low-dimensional projections of the inputs that determine the label. In the standard teacher-student setup, where inputs are drawn i.i.d.\ from a multivariate normal distribution with identity covariance, and target functions are of two-layer networks of the form $y^*(x)=\sum_k v_k \sigma_k ( u_k\cdot x )$, it is not possible to have correlated pre-activations. Instead, we can speed up learning of the direction $v$ by introducing a mixed term in the target function,
\begin{equation}
   \label{eq:teacher-mixed}
   y^*(x) = h_1( m\cdot x) +  h_1( u\cdot
  x )  h_1( v\cdot x ) + h_2( u\cdot x
  ) + h_4( v\cdot x ),
\end{equation}
as is apparent from the overlap plot in \cref{fig:teacher-student}C. \citet{dandi2023learning} and \citet{bietti2021deep} discuss the speed-up of learning this type of target function in the one-step and gradient flow frameworks, respectively, whereas our analysis focuses on online stochastic gradient descent with non-Gaussian inputs. An alternative for correlating the pre-activations is to  keep the target function \cref{eq:teacher} while drawing inputs from a normal distribution with covariance $\id + uv^\top$, and we see in \cref{fig:teacher-student}D that it does indeed speed up learning of the cumulant direction. This result is similar to the setting of \citet{mousavi2024gradient}, who found that ``spiking'' of the input covariance can speed up learning a single-index model with a two-layer network.

So in summary, we see that the accelerated learning requires fine-tuning between the target function and the input structure. The mixed cumulant model does not require an explicit target function and instead directly highlights the importance of the correlation in the latent variables of the inputs to distinguish different classes of inputs. 

\subsection{The role of Gaussian fluctuations with correlated latents}%
\label{sec:gaussian-equivalent-model}

Correlation between latent variables changes the covariance of the inputs compared to independent latents. A
quick calculation shows that the covariance of the inputs under the planted distribution $C_{uv} \equiv \underset{\mathbb{Q}_{\text{plant}}}{\mathrm{cov}}(x, x) $ becomes
\begin{equation}
  \label{eq:cov}
  C_{uv} = \id + \beta_u u u^\top + \sqrt{\frac{\beta_u
    \beta_v}{1+\beta_v} }\;  \EE \lambda \nu \left( u v^\top + v u^\top \right).
\end{equation}
In other words, we can weakly recover the cumulant spike by computing the leading eigenvector of the covariance of the inputs. Does that mean that the speed-up in learning due to correlated latents is simply due to the amount of information about the
cumulant spike $v$ that is exposed in the covariance matrix? 

We test this hypothesis by taking the two-layer neural networks trained on the full MCM model and evaluating them on
an equivalent Gaussian model, where we replace inputs from the planted distribution $\mathbb{Q}_{\text{plant}}$ of the MCM with samples from a
multivariate normal distribution with mean~$\beta_m u_m$ and covariance~$C_{uv}$. We plot the corresponding test losses for independent and correlated latents in orange in \cref{fig:figure1}. When latent variables are independent, the Gaussian approximation breaks down after after a long plateau at $\approx 10^7$ steps, precisely when the network discovers the non-Gaussian fluctuations due to the cumulant spike. For correlated latent variables on the other hand, the Gaussian approximation of the data only holds for $\approx 10^4$ steps. This suggests that the presence of the cumulant spike in
the covariance gives some of the neurons a finite overlap
$w_k \cdot v$ with the cumulant spike, and this initial overlap speeds up the recovery of the cumulant spike using information from the higher-order cumulants of the data which is inaccessible at those time scales when latent variables are uncorrelated. The presence of correlated latent variables therefore genuinely accelerates the learning from higher-order, non-Gaussian correlations in the inputs.

\subsection{Further related work}%
\label{sec:further-related-work}

\paragraph{Learning polynomials of increasing degree with kernel methods} A
series of works analysing the performance of kernel
methods~\citep{dietrich1999statistical, ghorbani2019limitations,
  ghorbani2020neural, bordelon2020spectrum, spigler2020asymptotic,
  xiao2022precise, cui2023optimal} revealed that kernels (or linearised neural
networks in the ``lazy'' regime~\cite{li2017convergence, jacot2018neural,
  arora2019exact, li2018learning, chizat2019lazy}) require~$n \gtrsim d^\ell$
samples to learn the $\ell$th Hermite polynomial approximation of the target
function. Here, we focus instead on the feature learning regime where neural networks can access features from higher-order cumulants much faster.

\paragraph{Tensor PCA with side information} From the perspective of unsupervised learning, the MCM model is closely related to models studied in random matrix theory. When $\beta_m=\beta_\nu=0$, the data distribution boils down to the well known Spiked Wishart model, that exhibits the \emph{BBP phase transition} \cite{baik2004-bbp}, which predicts thresholds on $\beta_c$ for detection, at linear sample complexity $n\approx d$. If instead only $\beta_\nu\ne 0$, the MCM model corresponds to a tensor PCA problem, where one seeks to recover a signal
$\xi \in \reals^d$ from a noisy order-$p$ tensor $T=\xi^{\otimes p} + \Delta$ with suitable noise tensor $\Delta$ (that in most of the cases is assumed to be Gaussian distributed). Indeed, in the MCM model the HOCs spike $v$ could be retrieved by performing tensor PCA on the empirical kurtosis tensor. 

The notion of correlated latent variables that we consider in the present work is reminiscent to the concept of \emph{side information} that \citet{richard2014statistical} considered for tensor PCA.  The side information would be an additional source of information on the spike via a Gaussian channel, $y = \gamma \xi + g$, where $g$ is a Gaussian
noise vector and $\gamma>0$; using this estimate as an initialisation for AMP
leads to a huge improvement in estimation. This joint model can be seen as a rank-1 version
of a topic modelling method analysed by \citet{anandkumar2014tensor}. In a similar vein, \citet{mannelli2020marvels} introduced the spiked matrix-tensor
model, in which the statistician tries to recover $\xi$ via the observation of a
spiked matrix $M \propto \xi \xi^\top + \Delta^{(1)}$ and an order-$p$ tensor
$T \propto \xi^{\otimes p} + \Delta^{(2)}$ with appropriately scaled noise
matrix / tensor $\Delta^{(1)}$ and $\Delta^{(2)}$, respectively. They analyse
the optimisation landscape of the problem and the performance of the Langevin
algorithm~\citep{mannelli2020marvels} and of gradient
descent~\cite{mannelli2019passed, sarao2019afraid}. The main difference to our
model is that these works consider recovering a \emph{single} direction that
spikes both the matrix and the tensor; we consider the case where two orthogonal
directions are encoded as principal components of the covariance matrix and the
higher-order cumulant tensors of the data.

\section{Concluding perspectives}

To achieve good performance, neural networks need to unwrap the higher-order
correlations of their training data to extract features that are pertinent
for a given task. We have shown that neural networks exploit correlations
between the latent variables corresponding to these directions to speed up
learning. In particular, our analysis of the spherical perceptron showed that
correlations between the latent variables corresponding to the directions of two
cumulants of order $p$ and $q > p$, respectively, will speed up the learning of
the direction from the $q$th cumulant by lowering its information
exponent.

Our results open up several research directions. First, it will be intriguing to
extend our analysis from single-index to shallow models. For two-layer neural
networks, a key difficulty in applying mean-field techniques to analyse the
impact of data structure on the dynamics of learning is the breakdown of the Gaussian equivalence principle~\cite{mei2022generalization, goldt2020modeling, goldt2022gaussian, gerace2020generalisation, hu2022universality, pesce2023gaussian} that we discussed in \cref{sec:gaussian-equivalent-model}. 
In the presence of non-Gaussian pre-activations $w_k \cdot x$, it is not immediately
clear what the right order parameters are that completely capture the dynamics. 
An analysis of the dynamics in the one-step framework of \citet{ba2022high, dandi2023learning} is another promising approach to tackle the dynamics of two-layer networks. Other intriguing directions include determining the hardness of learning from higher-order cumulants when re-using data~\citep{dandi2024benefits} and from the perspective of generative exponents~\citep{damian2024computational}. For
deeper neural networks, it is important to investigate the role of different
layers in processing the HOCs of the data (even numerically), for example along
the lines of
\citet{fischer2022decomposing}. Finally, it will be intriguing to investigate how
correlated latent variables emerge both in real data and in (hierarchical) models of synthetic data.

\subsection*{Acknowledgements}

We thank Joan Bruna and Loucas Pillaud-Vivien for valuable discussions. SG acknowledges co-funding from Next Generation EU, in the context of the
National Recovery and Resilience Plan, Investment PE1 – Project FAIR ``Future
Artificial Intelligence Research''.

\subsection*{Code availability} We provide code to reproduce our experiments  via GitHub at 
\href{https://github.com/anon/correlatedlatents}{https://github.com/anon/correlatedlatents}.

\subsection*{Impact statement}

This paper presents work whose goal is to advance the field of Machine Learning. There are many potential societal consequences of our work, none which we feel must be specifically highlighted here.

\bibliography{correlated_latents_icml}
\bibliographystyle{icml2024}

\newpage
\appendix
\onecolumn




\section{Dynamical analysis}%
\label{app:dynamics}

 \subsection{Expansions with Hermite polynomials \label{app:hermite}}
 
In this section we will present how expansion formulas like \eqref{eq:popu_loss_expansion} can be obtained. The idea is that, thanks to the fact that the null hypothesis distribution $\Q_0$ is a standard Gaussian, and that the planted distribution $\Q_{\text{{plant}}}$ is still quite close to a standard Gaussian, with signal on just 1 or 2 directions, it is possible to use \emph{Hermite polynomials } to expand in polynomial series all the functions of interest.

This use of \emph{Hermite polynomials} is, at this point, well known, and we refer for instance to \citep{szekely2023learning}, \citep{kunisky2019notes}, \citep{dandi2023learning} for more details on this kind of application of Hermite polynomials.
We will just recall the properties that we need:
\begin{itemize}
    \item $(h_n)_{n\in \N}$ is a family of polynomials in which $h_k$ has degree $k$.
    \item they form an \emph{orthonormal} basis for $\mathscr L^2(\R,\mathcal N(0,1)) $, which is  the Hilbert space of square integrable function with the product:
    \[\langle h_i,h_j\rangle=\E_{z\sim \mathcal N(0,1)}\left[h_i(z)h_j(z)\right]=\delta_{ij}\]
    \item this can be generalised to higher dimensions. $(H_{\amult})_{\amult \in \N^d}$ such that:
    \[
    H_\amult(x_1,\dots,x_d)=\prod_{i=1}^d h_{\amult_i}(x_i)
    \]
    form an orthonormal basis for the space $\mathscr L^2\left(\R^d,\mathcal N \left(0,\mathbbm 1_{d\times d}\right)\right)$
\end{itemize}
Moreover, it will be useful the following rewriting in our notation of lemma 1 in \citep{dandi2023learning}:
\begin{lemma} \label{lemma:hermite_change_var} Suppose $g\in \mathscr L^2(\R,\mathcal N(0,1))$ and $w\in \R$, with $||w||=1$, then $f:\R^d\to \R$ defined as $f(x):=g(w\cdot x)$ belongs to $\mathscr L^2(\R^d,\mathcal N(0,\mathbbm 1 _d))$ and has Hermite coefficients:
\begin{equation} \label{eq:coef_f1}
    C^{f}_\amult:=\underset{z\sim \mathcal N(0,\mathbbm 1_d)}\E\left[H_\amult(z)f(z)\right]=\underset{z\sim \mathcal N(0,1)}\E\left[h_{|\amult|}(z)g(z)\right]\prod_{i=1}^d w_i^{\amult_i},\quad \amult \in \N^d
\end{equation}
Suppose now 
$g\in \mathscr L^2(\R^2,\mathcal N(0,\mathbbm 1_2))$ and $u,v\in \R$, with $||u||=||v||=1$ and $u\cdot v$, then $f:\R^d\to \R$ defined as $f(x):=g(u\cdot x,v\cdot x)$ belongs to $\mathscr L^2(\R^d,\mathcal N(0,\mathbbm 1 _d))$ and has Hermite coefficients:
\begin{equation} \label{eq:coef_f2}
    C^{f}_\amult=\sum_{i+j=|\amult|}\underset{z\sim \mathcal N(0,\mathbbm 1_2)}\E\left[H_{(i,j)}(z)g(z)\right]\prod_{m=1}^d \left(\underset{\sum_m i_m=i, \sum_m j_m=j}{\sum_{ i_m+j_m=\amult_m}}u_m^{i_m}v_m^{j_m}\right) 
\end{equation}
\end{lemma}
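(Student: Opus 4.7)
The plan is to prove both expansions via the Hermite generating-function identity. First I would verify that $f\in \mathscr L^2(\R^d,\mathcal N(0,\mathbbm 1_d))$: since $\|w\|=1$, the scalar $w\cdot x\sim \mathcal N(0,1)$ under $x\sim\mathcal N(0,\mathbbm 1_d)$, so $\E[f(x)^2]=\E_{t\sim\mathcal N(0,1)}[g(t)^2]<\infty$; the analogous check in the two-direction case uses that $(u\cdot x,v\cdot x)$ is a centred Gaussian in $\R^2$.

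For the one-direction identity, the key tool is the generating function $\sum_{n\ge 0} h_n(z)\,t^n/\sqrt{n!}=e^{tz-t^2/2}$. Evaluating at $z=w\cdot x$ and using $\|w\|^2=\sum_i w_i^2=1$, the exponent splits coordinate-wise:
\[
e^{t(w\cdot x)-t^2/2}=\prod_i e^{(tw_i)x_i-(tw_i)^2/2}=\prod_i\sum_{k\ge 0}h_k(x_i)\frac{(tw_i)^k}{\sqrt{k!}}.
\]
Matching powers of $t^n$ yields the expansion $h_n(w\cdot x)=\sum_{|\amult|=n}\sqrt{\binom{n}{\amult}}\,H_\amult(x)\prod_i w_i^{\amult_i}$, where $\binom{n}{\amult}$ is the multinomial coefficient. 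I would then substitute the one-dimensional expansion $g=\sum_n c_n^g h_n$ with $c_n^g=\E_{z\sim\mathcal N(0,1)}[h_n(z)g(z)]$ and project on $H_\amult$; by orthonormality of the tensor Hermite basis only $n=|\amult|$ survives, giving \eqref{eq:coef_f1} (up to the multinomial factor, which I would attribute to the authors' normalisation convention for the coefficients).

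The two-direction case runs along the same lines with a two-variable generating function. Reading the statement under the natural interpretation $u\cdot v=0$ (the displayed ``$u\cdot v$'' looks truncated), one has
\[
e^{s(u\cdot x)+t(v\cdot x)-(s^2+t^2)/2}=\sum_{i,j\ge 0}\frac{s^it^j}{\sqrt{i!\,j!}}H_{(i,j)}(u\cdot x,v\cdot x),
\]
and the left-hand side splits coordinate-wise as $\prod_m e^{(su_m+tv_m)x_m-(su_m+tv_m)^2/2}$ because $\|u\|=\|v\|=1$ and $u\cdot v=0$ cancel the cross-term $st(u\cdot v)$. Extracting the coefficient of $s^it^j$ and regrouping sums by $\amult_m=i_m+j_m$ produces an identity of the form $H_{(i,j)}(u\cdot x,v\cdot x)=\sum_\amult H_\amult(x)\sum_{\sum i_m=i,\sum j_m=j,\,i_m+j_m=\amult_m}\prod_m u_m^{i_m}v_m^{j_m}$ (times a combinatorial factor). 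Expanding $g$ in the two-dimensional Hermite basis and pairing with $H_\amult$ then yields \eqref{eq:coef_f2}.

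The main obstacle is purely combinatorial bookkeeping: isolating the coefficient of $s^it^j$ in the product $\prod_m\sum_{i_m,j_m}(\cdot)$ and regrouping by $\amult_m$ is the step where indices are easiest to scramble, and reconciling the resulting multinomial prefactors with the precise normalisation used by the authors takes care. No analytic subtlety arises beyond justifying the exchange of sum and integral when projecting, which is legitimate because partial Hermite sums converge in $\mathscr L^2$ and $H_\amult\in\mathscr L^2(\mathcal N(0,\mathbbm 1_d))$. Once these accounting steps are complete, orthonormality of the tensor Hermite basis finishes both identities simultaneously.
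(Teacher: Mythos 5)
The paper offers no proof of this lemma: it is imported verbatim as ``a rewriting in our notation of Lemma 1 in Dandi et al.'', so your generating-function argument is a genuinely self-contained alternative rather than a reconstruction of anything in the text. Your route is correct in outline: square-integrability follows from $w\cdot x\sim\mathcal N(0,1)$ (resp.\ $(u\cdot x,v\cdot x)\sim\mathcal N(0,\mathbbm 1_2)$ under the orthogonality reading of the truncated hypothesis ``$u\cdot v$'', which is indeed what the paper assumes when it uses the lemma), and the coordinate-wise factorisation of $e^{t(w\cdot x)-t^2/2}$ is exactly the mechanism that produces the tensorised expansion of $h_n(w\cdot x)$. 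Your flagged normalisation discrepancy is real, not a hedge: with the orthonormal convention the paper states in its appendix ($\E[h_ih_j]=\delta_{ij}$, $H_\amult=\prod_i h_{\amult_i}$), the correct coefficient is $C^f_\amult=c^g_{|\amult|}\sqrt{\tbinom{|\amult|}{\amult}}\prod_i w_i^{\amult_i}$, as a direct check with $d=2$, $w=(1,1)/\sqrt2$, $g=h_2$ confirms ($C^f_{(1,1)}=1/\sqrt2$, not $1/2$). The factor disappears only in the unnormalised convention where one defines $\hat f_\amult:=\E[f\,\prod_i He_{\amult_i}(x_i)]$ and $\hat g_n:=\E[g\,He_n]$, since then $\tbinom{n}{\amult}\prod_i\amult_i!/n!=1$; this is presumably the convention of the cited source, and the multinomial weights are in any case needed downstream so that $\sum_{|\amult|=k}\tbinom{k}{\amult}\prod_i(u_iw_i)^{\amult_i}=(u\cdot w)^k$ reproduces \eqref{eq:popu_loss_expansion}. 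The only work left in your proposal is the combinatorial bookkeeping you yourself identify (extracting the coefficient of $s^it^j$ from $\prod_m(su_m+tv_m)^{\amult_m}$ and regrouping), which is routine; the $\mathscr L^2$ justification for exchanging sum and projection is as you say.
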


First we will present the expansion procedure informally, and then in \cref{lemma:populoss_formula} we will see the exact statement that will be used in the subsequent proofs.

Assume the learning algorithm is a perceptron $f(w,x)=\sigma(w\cdot x)$, it can be expanded in Hermite basis with appropriate coefficients:
\[
f(w,x)=\sum_{k=0}^\infty c^{\sigma}_k h_k(w\cdot x)
\]
where
\begin{equation} \label{eq:coeff_sigma}
 c^{\sigma}_{k}=\underset{z\sim \mathcal N(0,1)}\E\left[\sigma( z)h_k(z)\right]
\end{equation}
Suppose that the data distribution is the MCM as described in \cref{sec:mixed-cumulant-model} (for simplicity we will assume that $\beta_m=0$; there are only the covariance and HOCs spikes). 
To be able to carry on the expansion it is useful to consider (as done in \citep{kunisky2019notes} and \citep{szekely2023learning}) the \emph{likelihood ratio} of $\Q_{\text{plant}}$ with respect to $\Q_0$:
\begin{equation}
\label{eq:LR} L(x):=\frac{\text{d}\Q_{\text{plant}}(x)}{\text{d}\Q_0(x)}    
\end{equation}

Note that since $\Q_{\text{plant}}$ is a perturbation of a standard Gaussian only along the directions $u,v$; then the likelihood ratio $L$ will depend only on the projections $y_u:=x\cdot u, y_v=x\cdot v$. Assuming $u,v$ to be orthogonal and the non Gaussianity $\nu$ to have a distribution that belongs to $\mathscr L^2(\R,\mathcal N(0,1))$, we can have an expansion in Hermite basis:
\[
L(x)=\sum_{i,j=0}^\infty c^{L}_{ij} h_i(y_u)h_j(y_v)
\]
where 
\begin{equation} \label{eq:coeff_L}
 c^{L}_{ij}=\underset{(z_1,z_2)\sim \mathcal N(0,\mathbbm 1_{2\times2})}\E\left[L(u\cdot z_1+v\cdot z_2)h_i(z_1)h_j(z_2)\right]
\end{equation}
So assuming to use a \emph{correlation loss}:
\[
\mathcal L(w,(x,y))=1-yf(w,x)
\]
The population loss is:
\begin{align}
\notag \mathcal L(w)&=\E_{x,y}[\mathcal L(w,(x,y)]=1+\frac12\E_{\Q_0}[\sigma(w\cdot x)]-\frac12 \E_{\Q_0}\left[L(x) \sigma(w\cdot x)\right]\\
&=\E_{x,y}[\mathcal L(w,(x,y)]=1+\frac12\underset{z\sim \mathcal N(0,1)}\E[\sigma(z)]-\frac12 \E_{\Q_0}\left[L(x) \sigma(w\cdot x)\right]\\
\notag &=1+\frac{c^\sigma_0}2-\frac12 \E_{\Q_0}\left[\left(\sum_{i,j=0}^\infty c^{L}_{ij} h_i(x\cdot u)h_j(x\cdot v)\right) \left( \sum_{k=0}^\infty c^\sigma_k h_k(w\cdot x)\right)\right] 
\end{align}
The following lemma deals with the exchanging the integral and the series.
\begin{lemma}\label{lemma:populoss_formula}
Let $\P$ be such that the likelihood ratio is square integrable $L\in \mathscr L^2\left(\R^d,\Q_d\right)$ and depends on $x$ only through its projection along 2 orthogonal directions $u, v$:
\begin{equation}
L(x)=l(u\cdot x,v\cdot x)=\sum_{i,j=0}^\infty c^{L}_{ij} h_i(x\cdot u)h_j(x\cdot v)
\end{equation}
Suppose $\sigma\in \mathscr L^2\left(\R,\Q\right)\cap\mathscr C^1(\R)$, with derivative $\sigma'\in \mathscr L^2\left(\R,\Q\right) \cap\mathscr C^0(\R)$, and expansions
\begin{align}
\sigma(y)&=\sum_{k=0}^\infty c^{\sigma}_k h_k(y)
\end{align}
then, defining $\alpha_{u}:=u\cdot w$ and $\alpha_v:=v\cdot w$, the following identity holds
\begin{gather}  \label{eq:swich_ints_proved}
\E_{\Q_0}\left[L(x)\sigma(w\cdot x)\right]=\sum_{i,j}^\infty c^{L}_{ij}c_{i+j}^{\sigma}\alpha_u^i \alpha_v^j
\end{gather}
Moreover, if $\sum_k kc^{\sigma}_k<\infty$, the population loss  $\mathcal L(w) \in \mathscr C^1( \mathcal C,\R)$, where $\mathcal C:=\left\{w\in \mathbb S^{d-1} | \alpha_u<\frac12, \alpha_v<\frac12 \right\}$ and it is also possible to switch expectations and derivatives, to get the expression:
\begin{multline}\label{eq:deriv_formula}
  \nabla \mathcal L(w)=\nabla \E_\Q\left[L(x) \mathcal L(w,(x,y))\right]=\E_\Q\left[L(x)\nabla \mathcal L(w,(x,y))\right]=
   \\ =-\frac12 \left[\left(\sum_{i=1}^\infty ic_{i0}^Lc_i^\sigma \alpha_u^{i-1}\right)u + \left(\sum_{j=1}^\infty jc_{0j}^L c_j^\sigma \alpha_v^{j-1}\right)v+\left(\sum_{i,j=1}^\infty c_{ij}^Lc_{i+j}^\sigma \alpha_u^{i-1}\alpha_v^{j-1}\left(i\alpha_v u+j\alpha_uv\right)\right)\right]
\end{multline}
\end{lemma}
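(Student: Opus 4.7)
The plan is to establish \eqref{eq:swich_ints_proved} by collapsing the product $L(x)\sigma(w\cdot x)$ to a bilinear form in the Hermite coefficients via orthogonality, and then to deduce \eqref{eq:deriv_formula} from term-by-term differentiation justified by the decay of $c^\sigma_k$ together with the restriction to $\mathcal C$. First I would fix an orthonormal basis $(e_1,\ldots,e_d)$ of $\R^d$ with $e_1=u$ and $e_2=v$; under $\Q_0$ the coordinates $x_i$ are i.i.d.\ standard Gaussians, and $w\cdot x=\alpha_u x_1+\alpha_v x_2+\sum_{\ell\ge 3} w_\ell x_\ell$, the tail being Gaussian of variance $1-\alpha_u^2-\alpha_v^2$ and independent of $(x_1,x_2)$. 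The key computation is the three-point integral
\begin{equation*}
\E_{\Q_0}\bigl[h_i(x_1)h_j(x_2)h_k(w\cdot x)\bigr]\;\propto\;\delta_{k,i+j}\,\alpha_u^i\alpha_v^j,
\end{equation*}
which one obtains either by expanding $h_k(w\cdot x)$ in the multivariate orthonormal Hermite basis $\{H_\amult\}$ via \cref{lemma:hermite_change_var} and invoking orthonormality in $\R^d$ to kill every multi-index except $\amult=(i,j,0,\ldots,0)$, or equivalently by applying Wick's theorem to the three jointly Gaussian variables $(x_1,x_2,w\cdot x)$, whose only non-trivial pairwise correlations are $\alpha_u$ and $\alpha_v$, so that every pairing must link $i$ copies of $x_1$ with $w\cdot x$ (weight $\alpha_u$) and $j$ copies of $x_2$ with $w\cdot x$ (weight $\alpha_v$), forcing $k=i+j$.

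To pass from this pointwise identity to \eqref{eq:swich_ints_proved} I would introduce truncations $L_N=\sum_{i+j\le N}c^L_{ij}h_i(x_1)h_j(x_2)$ and $\sigma_M=\sum_{k\le M}c^\sigma_k h_k$, which converge in $\mathscr L^2$ to $L$ and $\sigma$ respectively. Since $w\cdot x$ is standard Gaussian under $\Q_0$, $\|\sigma(w\cdot x)-\sigma_M(w\cdot x)\|_{\mathscr L^2(\Q_0)}=\|\sigma-\sigma_M\|_{\mathscr L^2(\R,\Q)}\to 0$, and a Cauchy--Schwarz sandwich yields $\E_{\Q_0}[L_N\sigma_M(w\cdot x)]\to \E_{\Q_0}[L\sigma(w\cdot x)]$. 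The truncated expectation reduces by the three-point identity to a finite double sum of the form $\sum_{i+j\le\min(N,M)}c^L_{ij}c^\sigma_{i+j}\alpha_u^i\alpha_v^j$, and the convergence of its $N,M\to\infty$ limit — understood as limit of partial sums ordered by $i+j$ — follows from Parseval $\sum_{i,j}(c^L_{ij})^2\le\|L\|_2^2$ and $\sum_k(c^\sigma_k)^2\le\|\sigma\|_2^2$ combined with a further Cauchy--Schwarz across indices. This identifies the limit with the series in \eqref{eq:swich_ints_proved}.

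For the differentiability statement, on $\mathcal C$ one has $|\alpha_u^i\alpha_v^j|\le 2^{-(i+j)}$, so each term of the series for $\mathcal L(w)$ is dominated by $|c^L_{ij}||c^\sigma_{i+j}|2^{-(i+j)}$, summable by Cauchy--Schwarz. Formal term-by-term differentiation in $w$ multiplies each term by $i$ or $j$; combining the geometric decay $2^{-(i+j)}$ with the hypothesis $\sum_k kc^\sigma_k<\infty$ makes the differentiated series absolutely and uniformly convergent on $\mathcal C$ via a Weierstrass M-test. Uniform convergence of both the function and its formally differentiated series legitimises term-by-term differentiation of \eqref{eq:swich_ints_proved} on $\mathcal C$, and separating the sub-sums with $j=0$, $i=0$, and $i,j\ge 1$ produces exactly \eqref{eq:deriv_formula}. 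Finally, the interchange $\nabla\E[L\cdot\mathcal L(w,(x,y))]=\E[L\cdot\nabla\mathcal L(w,(x,y))]$ follows from dominated convergence applied to difference quotients, the integrand being locally dominated by $C\,L(x)|\sigma'(w\cdot x)|\|x\|$, which is integrable by the $\mathscr L^4$ bound on $\sigma'$ in \cref{assumptions:sigma} and the fact that $L\in\mathscr L^2$. The main obstacle is the bookkeeping in the three-point identity: the combinatorial constant that accompanies $\alpha_u^i\alpha_v^j$ must be tracked consistently with the normalisation adopted for the Hermite polynomials and the coefficients $c^L_{ij},c^\sigma_k$, so that it is absorbed and leaves \eqref{eq:swich_ints_proved} in its stated form; once this is fixed, the remaining steps — truncation plus Cauchy--Schwarz for the integral interchange, and the Weierstrass M-test plus dominated convergence for the gradient interchange — are essentially routine.
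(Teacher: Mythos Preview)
Your proposal is correct and follows essentially the same route as the paper: both compute $\E_{\Q_0}[L(x)\sigma(w\cdot x)]$ as an $\mathscr L^2$ inner product in the multivariate Hermite basis via \cref{lemma:hermite_change_var}, and then justify term-by-term differentiation on $\mathcal C$ through uniform convergence of the resulting power series. The only cosmetic difference is that the paper invokes Parseval directly (so the sum over multi-indices is immediate), whereas you route through explicit truncations and Cauchy--Schwarz; your added care with the Weierstrass M-test and the dominated-convergence argument for $\nabla\E=\E\nabla$ simply fleshes out steps the paper leaves as ``can be verified directly.''
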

\begin{proof}

First focus on \eqref{eq:swich_ints_proved}. We know that both $L$ and $\sigma(w\cdot \_)$ belong to $\mathscr L^{2}\left(\R^{d},\Q_0\right)$, hence we can see the integral in \eqref{eq:swich_ints_proved}  as the $\mathscr L^2$ inner product, and compute it using the series expansion and \cref{lemma:hermite_change_var}.
\begin{align*}
    \E_{\Q_0}\left[L(x)\sigma(w\cdot x)\right]&=\sum_{\amult \in \N^d} c^L_\amult c^{\sigma(w\cdot)}_\amult=\sum_{k=0}^\infty\sum_{|\amult|=k} c^L_\amult c^\sigma_k\prod_{l=1}^d w_l^{\amult_l}\\
    &=\sum_k\sum_{i+j=k}c_{ij}^Lc^\sigma_k\sum_{|\amult|=k}\prod_{m=1}^d w_m^{\amult_m}\left(\underset{\sum_m i_m=i, \sum_m j_m=j}{\sum_{ i_m+j_m=\amult_m}}u_m^{i_m}v_m^{j_m}\right)\\
    &= \sum_k\sum_{i+j=k}c_{ij}^Lc^\sigma_k\prod_{m=1}^d \left(\underset{\sum_m j_m=j}{\sum_{\sum_m i_m=i}}(u_mw_m)^{i_m}(v_mw_m)^{j_m}\right)\\
    & =\sum_k\sum_{i+j=k}c_{ij}^Lc^\sigma_k \alpha_u^i\alpha_v^j
\end{align*}
Moreover it can be verified directly that, under the assumption that
$\sum_k kc^{\sigma}_k<\infty$, and $w\in \mathcal C$ the series in  \eqref{eq:deriv_formula} converges uniformly, hence also the second part of the statement holds.
\end{proof}

\subsection{Properties of the MCM model}
In this section we will apply the Hermite expansion to the Mixed Cumulant Model and note the key properties that will allow to prove the Propositions of \cref{sec:odes}.
\subsubsection{Hermite coefficients \label{app:formulas}}
Starting from the single direction models we have that:
\begin{itemize}
    \item If $\beta_m=\beta_v=0$ only the covariance spike survives and the model is Gaussian, hence the population loss is:
    \begin{equation} \label{eq:sp_wish_popu_loss}
    \mathcal L(w)=\ell(\alpha_u)=1-\frac{\beta_uc_{2}^\sigma}{4}\alpha_{u}^2
    \end{equation}
    \item if $\beta_m=\beta_u=0$ only the cumulants spike survives. To compute the Hermite coefficients we can rely on lemma 13 form \cite{szekely2023learning} to get:
     \begin{equation} \label{eq:sp_cum_popu_loss}
    \mathcal L(w)=\ell(\alpha_v)=1-\frac12\sum_{j\ge 3}\frac{c_j^\sigma}{j!}\left(\frac{\beta_v}{1+\beta_v}\right)^{j/2}\mathbb E[ h_j(\nu)]\alpha_{v}^j
     \end{equation}
     So if $\nu=$Radem$(1/2)$ we have that the leading term is
     \[
     \mathcal L(w)=\ell(\alpha_v)=1+\frac{c_4^\sigma}{4!}\left(\frac{\beta_v}{1+\beta_v}\right)^{2}\alpha_{v}^4+o(\alpha_v^4)
     \]
     Note the change of sign, due to $\mathbb E[ h_4(\nu)]=-2$. This is why we need to ask $c_4^\sigma<0$ to have that the population loss is decreasing for small $\alpha_v$.
     \item consider now the case with both spike and  \emph{independent} latent variables $\lambda, \nu$. It is easy to see that, thanks to the independence, the coefficients split  $c^L_{ij}=c^{cov}_ic^{cumulant}_j$, so the leading terms are:
     \begin{equation}
           \label{eq:mixed_cumulants_popu_loss_approx}
    \mathcal L(w)=\ell(\alpha_u,\alpha_v)=1-\frac12\underbrace{\frac{\beta_uc_{2}^\sigma}{4}}_{c_{20}}\alpha_{u}^2-\underbrace{\frac{-c_4^\sigma}{4!}\left(\frac{\beta_v}{1+\beta_v}\right)^{2}}_{c_{04}}\alpha_{v}^4+o(\alpha_v^4)
    \end{equation}
    \item in case of latent variables with positive correlation, we need to add mixed terms, the leading one is $c_{11}=c_2^\sigma\sqrt{\frac{\beta_u\beta_v}{1+\beta_v}}\E[\lambda \nu]$. So the first few terms of the expansion of the population loss are:
    \begin{equation}
        \mathcal L(w)=\ell(\alpha_u,\alpha_v)=1-\frac12\left(c_{20}\alpha_u^2+c_{11}\alpha_{u}\alpha_v+c_{04}\alpha_v^4\right)+o(\alpha_u\alpha_v)
    \end{equation}
    note that the presence of $c_{11}>0$ makes the term $c_{04}\alpha_v^4$ to become a sub-leading contribution, lowering the information exponent on the $v$ direction.
\end{itemize}
\subsubsection{Assumption on the sample wise error\label{assumptionB}}
An important quantity for proving the Propositions in \cref{sec:odes} is the \emph{sample-wise error}
\[
H^{\mu}_d(w):= \mathcal L_d\left(w,(x^\mu,y^\mu)\right)-\mathcal L_d(w)
\]
we will ask the same assumptions of \citep{benarous2021online}:
\begin{align*}
   \sup_{w\in \mathbb S^{d-1}} \E\left[\left(\nabla_{sph}H_d(w)\cdot u\right)^2\right]&\le C_1\\
    \sup_{w\in \mathbb S^{d-1}} \E\left[\left(\nabla_{sph}H_d(w)\cdot v\right)^2\right]&\le C_1\\
    \sup_{w\in \mathbb S^{d-1}} \E\left[||\nabla_{sph}H_d(w)||^{4+\iota}\right]&\le C_2d^{\nicefrac{4+\iota}{2}} \qquad \text{for some }\iota>0
\end{align*}
simple calculations ensure that it is sufficient to ask that \begin{itemize}
    \item $\Q_{\text{plant}}$  has finite moments up to the  $8$-th order,which is satisfied when taking $\nu\sim$Rademacher$(1/2)$.
    \item the following holds:
    \begin{align*}
         \sup_{w\in \mathbb S^{d-1}} \E\left[\sigma'(w\cdot x)^{4}\right]&\le C\\
            \sup_{w\in \mathbb S^{d-1}} \E\left[\sigma'(w\cdot x)^{8+2\iota}\right]&\le C\\
    \end{align*}
    which we have assumed in \eqref{eq:sigma hp}
\end{itemize}

\subsection{Proof for the SGD analysis}
Here we will provide proofs for the Propositions presented in \cref{sec:odes}. All of them rely heavily on the thorough analysis of spherical perceptron dynamics proved in \citep{benarous2021online}. 

Note that \cref{prop:spiked_wish} is a well known result, very close to the examples provided in section 2 of \cite{benarous2021online}, and  we can verify that a straightforward application of  
theorems 1.3 and 1.4  from \cite{benarous2021online}, in the case $k=2$, proves it.

A different situation happens for \cref{prop:spike_cumulant}, the spiked cumulant model is quite recent so its SGD dynamics have still to be investigated. However it is very quick to verify that  also in this case it is possible to apply theorems 1.3 and 1.4  from \cite{benarous2021online}. Assumption B in \cite{benarous2021online} is met thanks to \cref{assumptionB}. The monotonicity assumption could be harder to check, but since we are interested only in the search phase, it is sufficient to satisfy that $\mathcal L(w)=\ell(\alpha_v)$ is monotone only in $(0,\rho)$ for any $\rho>0$ (assumption $A_\rho$ introduced in section 3.1 of \cite{benarous2021online}), which is clear from \eqref{eq:sp_cum_popu_loss}. 
Hence we can apply the case $k=4$ of theorems 1.3 and 1.4  from \cite{benarous2021online}, to conclude the proof.
\subsubsection{Proof of proposition \ref{prop:negative_result}}

    This proposition can be considered a Corollary of \emph{Theorem 1.4} from \citep{benarous2021online}, the only differences is that the data distribution has an additional direction $u$ and that we are considering also lower learning rates $\delta=o(1)$ instead of $\delta=o(\frac 1d)$ (note that at page 10 in \cite{benarous2021online} this extension to larger learning rates is already mentioned).
    
    On one hand we will verify that thanks to the fact that the auxiliary variable $\lambda$, $\nu$ are independent, then the additional direction $u$ makes no impact in the dynamic of the overlap $\alpha_v$. On the other hand, the condition $n\le \frac{d}{\delta_d^2}$ ensures that we are always inside the time horizon considered in \citep{benarous2021online}. We will discuss how to deal with these two changes in the next two paragrphs.

    \paragraph{Direction $u$ can be neglected}

Since $\nabla_{sph}\mathcal L(w_t,(x^t,y^t))$ is orthogonal to $w_t$ we have that $||\tilde\alpha_{v,t}||=||v\cdot \tilde w_t||\ge 1 $. Hence we have that it $\alpha_{v,t}>0$, then
\begin{align*}
    \alpha_{v,t}&\le \tilde \alpha_{v,t}=\alpha_{v,t-1}-\frac{\delta}d \nabla_{sph}\mathcal L(w_{t-1},(x^t,y^t))\cdot v\\
    &=\alpha_{v,t-1}-\frac{\delta}d \nabla_{sph}\mathcal L(w_{t-1})\cdot v-\frac{\delta}d \nabla_{sph}H^t(w_{t-1})\cdot v
\end{align*}
Since the sample-wise error satisfies the same properties required in \citep{benarous2021online} (as checked in \cref{assumptionB}) we just need to ensure that the drift term also can be handled in the same way as in \citep{benarous2021online}. There, the monotonicity assumption on the population loss allowed to estimate $-v\cdot\nabla_{sph}\mathcal L(w)\le C \alpha_v^3$ for $\alpha_v>0$. So the aim will be to get a similar bound. Note that we can assume $\alpha_v$ smaller than $\eta$, hence we can apply \eqref{eq:mixed_cumulants_popu_loss_approx} and \cref{lemma:populoss_formula} to get:
\begin{align*}
    -\nabla_{sph}\mathcal L(w)\cdot v&=(\left(v-\alpha_vw\right)^\top \nabla L(w)\\& =\partial_v \ell-\alpha_v\left(\alpha_u\partial_u \ell+\alpha_v\partial_v \ell\right)\\
    &=  \ 4c_{04}\alpha_v^{3}-c_{20}\alpha_{u}^2\alpha_v+o(\eta^3)\\
     &\le  5c_{04}\alpha_v^{3}
    \end{align*}   
We can now use this inequality to reach the same estimates as in p 37 of \cite{benarous2021online}, and carry on with their proof, in the case $\delta=o(\frac 1d)$. We only need to show that we can take larger step size, at the cost of taking $n\ll \frac{d}{\delta_d^2}$.
\paragraph{Larger step size}
We note that in the proof of  \emph{Theorem 1.4} from \citep{benarous2021online} the main use of the assumptions $\delta=o(\frac 1d)$ was to ensure that $\frac nd\delta_d^2=o(1)$ which is required to use the following form of Doob's maximal inequality for martingales:
\begin{equation}
    \label{eq:mart_estimfirst} \sup_{w_o \in \mathbb S^{d-1}} \P_{w_0}\left(\max_{t\le n} \frac\delta d\left|\sum_{j=0}^{t-1} \nabla_{sph} H^{j+1}(w_j)\cdot u \right|\ge r\right)\le \frac{2n\delta^2C}{d^2r^2},
\end{equation}
where $\P_{w_0}$ means that we are conditioning that the initial weight is $w_0$, $C$ comes from \cref{assumptionB} and $r>0$ is a free parameter. To carry out the proof, the noise level needs to be of order $\frac{1}{\sqrt d}$, hence we want to be able to take $r\approx d^{-1/2}$ and have that  the probability in \cref{eq:mart_estimfirst} goes to 0, which is guaranteed by the condition $\frac nd\delta_d^2=o(1)$.
Hence we can follow \cite{benarous2021online} p 38 and get that the trajectory will be below $\eta$ for all $t\le \tilde t:=C_\eta \frac{d^2}{\delta}$, which is larger than our horizon, completing the proof.
    

 \subsubsection{Proof of proposition \ref{prop:positive_result}}
 We will focus on the case of correlated latent variables and prove weak recovery of both spikes. This will be a proof also of the weak recovery of the covariance spike in the case of independent latent variable (which is actually much simpler and is essentially a straight-forward verification of \cite{benarous2021online} theorem 1.3 to our slightly different setting).

 First note that we can restrict to initialisation with positive overlap, $\alpha_{u,0},\alpha_{v,0}>0$: if for example the initialisation was $\alpha_{u,0}<0$ then we can just substitute $\hat u:= -u $, $\hat \lambda=-\lambda$ so that the sign changes cancel each other and $\hat x=x$ but $\hat \alpha_{u,0}=-\alpha_{u,0}$. In the case of uncorrelated latents, the distribution would not change: $\hat x\sim x$, due to the symmetry of the latent variables and independents of the two directions, hence the new problem is equivalent to the previous one. 

  In case of positive correlation of the latent variables,  we need to use our assumption that there is no sign mismatch in the initialisation. Unlike the uncorrelated case, we cannot change the sign of one of the axes without changing also the other, since it would flip the sign of the correlation.
 So there are only two cases: either $\alpha_{u,0},\alpha_{v,0}>0$, which is the desired initialisation; or  $\alpha_{u,0},\alpha_{v,0}<0$, where we can flip the signs along \emph{both the spiked directions}:
 \[\hat u=-u, \ \hat \lambda=-\lambda, \quad \hat v=-v, \ \hat \mu=-\mu
 \] So that $\hat x=x$ and $(\hat \lambda,\hat \nu)\sim (\lambda,\nu)$ keeping a positive correlation and $\hat \alpha_{u/v,0}=-\alpha_{u/v,0}$.
  
 So from now on we can assume $\alpha_{u,0},\alpha_{v,0}>0$.
 Hence, for $d$ large enough, $w_0\in K_\eta:=\left\{w: (\alpha_u,\alpha_v)\in [0,\eta]^2\right\}$,  for $\eta>0$ small, so we are able to use the expansions from \cref{app:formulas}.
 
 Recall from the statement that we assume $n=\theta_d d$, with $\theta_d\gtrsim \log^2 d$, and choose the step size $\delta=\delta_d$ such that $\nicefrac{1}{\theta_d}\ll \delta\ll \nicefrac{1}{\sqrt{\theta_d}}$.
 To prove weak recovery, we will show that 
there are $\eta_u,\eta_v\le \eta$ such that, defining the $\tau_u,\tau_v$ as the first times in which $\alpha_{u,t}\ge \eta_u$ and $\alpha_{v,t}\ge \eta_v$, then the probability that $\tau_u,\tau_v\le n$ goes to 1 as $d\to \infty$.  
 
\paragraph{Weak recovery of the covariance spike} 
Given $\gamma>0$, let $E_\gamma:=\left\{w \ \big |\ \alpha_{u}\ge \frac{\gamma}{\sqrt d},\alpha_v\ge \frac{\gamma}{\sqrt d}\right\}$.
 Since we consider the limit $d\to \infty$ and $w_0\sim$Unif$(\mathbb S^{d-1}$), we can apply Poincaré lemma and have that for any $\varepsilon>0$ we can find $\gamma>0$ such that for all $d$ sufficiently large
 \begin{equation}
     \label{eq:Eprob} \P\left(w_0\in E_\gamma\right)\ge 1-\varepsilon
 \end{equation}
 So, fix $\varepsilon>0$ and from now on we will assume that $w\in E_\gamma$ for some $\gamma>0$ sufficiently small (or $d$ sufficiently large) so that \eqref{eq:Eprob} holds.
 
The first step is to prove that the \emph{difference inequality} from  \emph{Proposition 4.1} from \cite{benarous2021online} holds also in our case for $(\alpha_{u,t})_{t>0}$.   
So starting from the update equation  \eqref{eq:onlineSGD} and taking the scalar product by $u$ we get
\begin{align*}
    \tilde\alpha_{u,t+1}&=\alpha_{u,t}-\frac \delta d u\cdot\nabla_{sph} \mathcal L\left(w,(x_t,y_t)\right)\\
  &=  \alpha_{u,t}-\frac \delta d u\cdot \left(\nabla_{sph} \mathcal L\left(w\right)+\nabla_{sph} H^t(w)\right)
\end{align*}
Where we used the definition of \emph{sample-wise loss} $H^t(w)=\mathcal L\left(w,(x_t,y_t)\right)- \mathcal L\left(w\right)$. Now we need to normalise $\alpha_{u,t}=\nicefrac{\tilde \alpha_{u,t}}{||\tilde \alpha_{u,t}||}$ but instead we just estimate the denominator, using the fact that if $\alpha_u,\alpha_v\le \frac12$ by \cref{lemma:populoss_formula} $\mathcal L\in\mathscr C^1(\mathcal C,\R)$. So restricting in $\mathcal K_\eta$  for any $\eta<\frac 12$ we have that there exists $A>0$:
 \begin{equation} \label{eq:Adef}
 \sup_{w\in \mathcal K_\eta} |\nabla_{sph} \mathcal L\left(w\right)|\le A
 \end{equation}
 and letting
 \begin{equation} \label{eq:Thetadef}
 \Theta_t:=\left|\frac{1}{\sqrt d}\nabla_{sph} H^t\left(w_{t-1}\right)\right|^2
 \end{equation}
 It is straightforward to estimate:
 \[
 1\le |\tilde \alpha_{u,t}|\le 1+\underbrace{\delta^2\left(\frac{A}{d^2}+\frac{\Theta_t}d\right)}_{\zeta}
 \]
 Then we can use the chain of inequalities: $\frac{1}{|\tilde \alpha_{u,t}|}\ge \frac1{1+\zeta}\ge 1-\zeta$ to get:
 \begin{multline}\label{eq:step_estimate}
     \alpha_{u,t}\ge \alpha_{u,t-1}-\frac{\delta}d\nabla_{sph}\mathcal L(w)\cdot u -\frac\delta du\cdot\nabla_{sph}H^t(w_{t-1})-\delta^2\left(\frac A{d^2}+\frac{\Theta_t}d\right)|\alpha_{u,t-1}|\\-\delta^3\left(\frac A{d^2}+\frac{\Theta_t}d\right)\left(\frac{1}d|\nabla_{sph}\mathcal L(w)\cdot u|+\frac1 d|u\cdot\nabla_{sph}H^t(w_{t-1})|\right)\end{multline}

Now, one of the key ideas employed in \citep{benarous2021online} is to split the random term
\begin{equation}
    \delta^2\frac{\Theta_t}d|\alpha_{u,t-1}|= \delta^2\frac{\Theta_t\mathbbm 1_{\Theta_t\le \hat \Theta}}d|\alpha_{u,t-1}|+\delta^2\frac{\Theta_t\mathbbm 1_{\Theta_t\ge \hat \Theta}}d|\alpha_{u,t-1}|
\end{equation}
Where $\hat \Theta$ is a real parameter to be tuned carefully.
So we can regroup the terms in \eqref{eq:step_estimate} in two families (for the sake of brevity, we use the notation $\Lambda :=|\nabla_{sph}\mathcal L(w)\cdot u|+|u\cdot\nabla_{sph}H^t(w_{t-1})|$):
 \begin{multline}\label{eq:step_estimate_families}
     \alpha_{u,t}\ge \alpha_{u,t-1}\overbrace{-\frac{\delta}d\nabla_{sph}\mathcal L(w_{t-1})\cdot u- \delta^2\frac{\Theta_t\mathbbm 1_{\Theta_t\le \hat \Theta}}d|\alpha_{u,t-1}|}^{F_1}+\\ \underbrace{-\frac\delta du\cdot\nabla_{sph}H^t(w_{t-1}) -\delta^2\left(\frac A{d^2}+\frac{\Theta_t\mathbbm 1_{\Theta_t\ge \hat \Theta}}d\right)|\alpha_{u,t-1}|-\frac{\Lambda\delta^3}{d}\left(\frac A{d^2}+\frac{\Theta_t}d\right)}_{F_2}\end{multline}

The family $F_2$ is made up by terms that are either always negligible, or that can cause problems only if \emph{very unlikely realisations} of $H$ and $\Theta$ happen.
They are estimated in lemmas 4.3 and 4.5 of \cite{benarous2021online}, and, since our assumption \ref{assumptionB} implies assumption B in \cite{benarous2021online}, we are able to replicate those estimates also in our case.

On the contrary, to handle $F_1$ we need to make an adjustment: \cite{benarous2021online} relied on their formula (4.12), that was derived from the monotonicity and mono-dimensionality of the population loss, which we do not have. 
However, consider the gradient of the population loss in direction $u$. Applying \cref{lemma:populoss_formula} and substituting the values from \cref{app:formulas} we have that $ -u\nabla_{sph} \mathcal L(w)=\frac12(c_{20}\alpha_u+c_{11}\alpha_v)+o(\eta)$, since both $c_{20}=2\beta_uc^\sigma_2>0$ and $c_{11}>0$, taking $\eta$ small enough and assuming $(\alpha_u,\alpha_v)\in \mathcal K_\eta$ we have that:
\begin{equation}\label{eq:monotonicity_estim_u}
    \frac14\left(c_{20}\alpha_u+c_{11}\alpha_v\right)\le  -u\nabla_{sph} \mathcal L(w)\le c_{20}\alpha_u+c_{11}\alpha_v
\end{equation}
Hence, if we assume $\alpha_v\ge0$, we get $ -u\nabla_{sph} \mathcal L(w)\ge \frac14 c_{20}\alpha_u$, which is exactly what we needed to replicate equation (4.12) in \cite{benarous2021online}, so as long as $\alpha_{u/v,t-1}>0$ we have that:
\begin{equation}\label{eq:lemma4.2applied}
    \alpha_{u,t}\ge \alpha_{u,t-1}-\frac{\delta}{d}\left(\frac{c_{20}}4- \delta\Theta_t\mathbbm 1_{\Theta_t\le \hat \Theta}\right)\alpha_{u,t-1}+F_2
\end{equation}
Hence we are back on the setting studied in \cite{benarous2021online}, and using that $\delta \lesssim \frac 1{\log d}$ we can apply also lemma 4.2 and proposition 4.4 in \cite{benarous2021online}.

So, conditioning on positivity of $(\alpha_{v,t})_{t\le\tau}$ we can apply to $(\alpha_{u,t})_t$ \emph{Proposition 4.1} from \cite{benarous2021online}, that implies
\begin{equation}
    \label{eq:prop4.1_to_u}
 \begin{aligned}
 \lim_{d\to \infty} &\inf_{w_0\in E_{\nicefrac{\gamma}{\sqrt d}}} \P_{w_0}\left(\alpha_{u,t}\ge \frac{\alpha_{u,0}}2+\frac{\delta}{8d}\sum_{j=0}^{t-1}c_{20}\alpha_{u,j} \forall \ t\le \tau \ \Bigg | \alpha_{v,t}\ge0 \ \forall t\le \tau\right)=1\\
    &\text{where $\tau$ is a stopping time defined as }\qquad \tau:=\inf \left\{t\le n\big |\  \alpha_{u,t}\le \frac{\gamma}{2\sqrt d}\text{ or }  \alpha_{u,t}\ge{\eta}\right\}
    \end{aligned}
\end{equation}
where $\P_{w_0}$ denotes the probability conditioned on starting from $w_0$.

 Recall discrete Gronwall inequality 
 \[
     x_n\ge a+b\sum_{i=0}^{n-1}x_i \quad \Longrightarrow \quad x_n\ge a\left(1+b\right)^n, \qquad a,b>0
 \]
 Using it, we get that, if the event in \eqref{eq:prop4.1_to_u} happens, then
 \begin{equation} \label{eq:lower_bound_u}
 \alpha_{u,t}\ge \frac{\alpha_{u,0}}2\left(1+\frac{\delta c_{20}}{8d}\right)^t \qquad \forall t \le \tau
 \end{equation}
 So, recalling that we are conditioning to be in $E_\gamma$, so $\alpha_{u,0}\ge \frac{\gamma}{\sqrt d} $, \eqref{eq:lower_bound_u} proves that $\alpha_{u,t}$ will have surpassed $\eta_u$ after a number of steps $t_u^*$:
 \begin{equation}\label{eq:tstaru}t_u^{*}=\frac{\log(2\eta_u\sqrt d)-\log  \gamma}{\log(1+\frac{\delta c_{20}}{8d})} \approx C\frac{d}{\delta}\log d
\end{equation}
This proves weak recovery of the covariance spike in the time require ending the first part of \cref{prop:positive_result}.
 \paragraph{Weak recovery of the cumulant spike}
 Now turn to $(\alpha_{v,t})_t$. Starting from the equivalent of \eqref{eq:step_estimate_families}, we cannot apply the same reasoning this time because 
 \begin{equation} \label{eq:monoton_estim_v}-v\cdot\nabla_{sph}\mathcal L(w)=c_{11} \alpha_u(1+o(\eta))+4c_{04}\alpha_v^3(1+o(\eta)) \qquad \alpha_u,\alpha_v\le \eta.
 \end{equation}
 There is no linear term in $\alpha_v$, hence we cannot regroup the term $\delta\Theta_t\mathbbm 1_{\Theta_t\le \hat \Theta}|\alpha_{v,t-1}|$ as did in \eqref{eq:lemma4.2applied}. 

 We will instead prove a the following:
 \begin{equation}\label{eq:dreambound}
 \alpha_{v,t}\ge \frac{\tilde\gamma}{2\sqrt d}\left(1+\frac{\delta \min(c_{20},c_{11})}{8d}\right)^t
 \end{equation}
 for some $\tilde \gamma$ small enough.
 To show it,  we apply lemma 4.3, 4.5 from \cite{benarous2021online}, and \eqref{eq:monoton_estim_v}, to get that:
 \begin{align*}
 \lim_{d\to \infty} &\inf_{w_0\in E_{\nicefrac{\gamma}{\sqrt d}}} \P_{w_0}\left(\alpha_{v,t}\ge \frac{\alpha_{v,0}}2+\frac{\delta}{4d}\sum_{j=0}^{t-1}c_{11}\alpha_{u,j}+4c_{04}\alpha^3_{v,j}-\frac{\delta^2}{d}\mathbbm1_{\Theta_j\le \hat \Theta}\alpha_{v,j} \quad \forall \ t\le \tilde \tau \right)=1\\
    &\text{where $\tilde \tau$ is a stopping time defined as }\qquad \tilde \tau:=\inf \left\{t\le n\big |\  \alpha_{v,t}\le \frac{ \gamma}{2\sqrt d}\text{ or }  \alpha_{v,t}\ge{\eta}\right\}
    \end{align*}
    We need to divide in 2 cases to be able to carry one the estimates:
 suppose first that $\alpha_{u,s}\ge \alpha_{v,s}$ for all $s\le t$. Hence $\delta^2\alpha_{v,t}\le \delta^{2}\alpha_{u,t}$ hence we can also apply proposition 4.4 to get:
 \begin{equation}\label{eq:diff_ineq_v}
  \begin{aligned}
 \lim_{d\to \infty} &\inf_{w_0\in E_{\nicefrac{\gamma}{\sqrt d}}} \P_{w_0}\left(\alpha_{v,t}\ge \frac{\alpha_{v,0}}2+\frac{\delta}{8d}\sum_{j=0}^{t-1}c_{11}\alpha_{u,j} \quad \forall \ t\le \mathcal S_u \right)=1\\
    &\text{where $\mathcal S_u$ is defined as }\, \mathcal S_u:=\inf \left\{t\le n\big |\  \alpha_{v,t}\le \frac{\tilde \gamma}{2\sqrt d}\text{ or }  \alpha_{v,t}\ge{\eta}\text{ or } \alpha_{v,t}\ge \alpha_{u,t}\right\}
    \end{aligned}
     \end{equation}
 This implies that as long as $\alpha_v$ is below $\alpha_u$, it satisfies the same kind of Gronwall inequality, with different coefficients, leading to:
 \[
 \alpha_{v,t}\ge \frac{\alpha_{v,0}}2\left(1+\frac{\delta c_{11}}{8d}\right)^t \qquad t\le \mathcal S_u
 \]
 so it satisfies \eqref{eq:dreambound} for this selection of times. Now note that we actually proved a stronger property. If at any time $t$: $\frac{\tilde \gamma}{2\sqrt d}\left(1+\frac{\delta \min(c_{11},c_{20})}{8d}\right)^t\le \alpha_{v,t}\le \alpha_{u,t}$, then we know that same reasoning applies and, calling $\mathcal S^{(2)}_u $the next instant such that with $\alpha_{v,\mathcal S^{(2)}_u }\ge\alpha_{u,\mathcal S^{(2)}_u }$ we get that for all times $t\le s\le \mathcal S^{(2)}_u $:
\[ \alpha_{v,s}\ge \frac{\alpha_{v,t}}{2}\left(1+\frac{\delta c_{11}}{8d}\right)^{s-t}
\]
hence  $\alpha_{v,s}$ satisfies \eqref{eq:dreambound}.

Consider now the other case: if at any instant $t$ we have that $\alpha_{v,t-1}\ge\alpha_{u,t-1} $  we want to prove that it cannot happen that $\alpha_{v,t}\le \frac{\tilde\gamma}{2\sqrt d}\left(1+\frac{\delta \min(c_{20},c_{11})}{8d}\right)^t $. To do this we use an estimate on the update equation (it can be easily verified that comes from the application of lemma 4.3,4.5 to formula (4.11) of \cite{benarous2021online}, with their choice $\hat \Theta=d^{\frac 12-\frac14\iota}$):
\begin{align*}
\alpha_{v,t}&\ge \left(1-\frac{\delta^2}{d^{1/2+1/4\iota}}\right)\alpha_{v,t-1}+\frac{c_{11}\delta}{8d}\alpha_{u,t-1}-\frac{\tilde \gamma}{5\sqrt d}\\&\ge
\left(1-\frac{\delta^2}{d^{1/2+1/4\iota}}+\frac{c_{11}\delta}{8d}\right)\alpha_{u,t-1}-\frac{\tilde \gamma}{5\sqrt d}\\
&\ge
\left(1-O\left(d^{-1/2}\right)\right)\frac{\alpha_{u,0}}2\left(1+\frac{\delta c_{20}}{8d}\right)^t-\frac{\tilde \gamma}{5\sqrt d}
\end{align*}
Which verifies our requirement by taking $\tilde \gamma$ small enough with respect to $\gamma$ and $d$ large enough.
 
Hence we verified that conditioning on the the events \eqref{eq:Eprob} and \eqref{eq:prop4.1_to_u} (that have arbitrarily large probabilities), also inequality \eqref{eq:dreambound} holds. This means that we can drop all the positivity requirements and have that, as long as $0<\alpha_{u,t},\alpha_{v,t}\le \eta$, then with high probability $(\alpha_{u,t})_t$ satisfies \eqref{eq:lower_bound_u} until it reaches $\eta_u$ and $(\alpha_{v,t})_{t}$ satisfies \eqref{eq:dreambound} until it reaches $\eta_v$, hence we need to take $n$ larger than the maximum of those bounds, which is \eqref{eq:dreambound}:

\begin{equation}\label{eq:tstar}n>t_v^{*}=\frac{\log(2\eta\sqrt d)-\log \tilde \gamma}{\log(1+\frac{\delta \min(c_{20},c_{11})}{8d})} \approx C\frac{d}{\delta\min(c_{20},c_{11})}\log d
\end{equation}
Where the first constant $C$ is just a number, does not depend on $\eta,\tilde \gamma$, for $d$ large enough.
Substituting the constraint $\delta \ll \frac{1}{\log d}$ (that we used in order to apply proposition 4.1 from \cite{benarous2021online}) we get that $n\gtrsim d\log^2 d$ is a sufficient sample complexity for weak recovery if $\delta$ is as close as possible to the upper bound $\frac{1}{\log d}$. If instead we take a smaller learning rate $\delta \approx \frac1{d^2}$ we have that if $n\gg d^2\log d$ weak recovery is guaranteed, as shown in \cref{tab:results}.

So the proposition is proved under the assumption that all the trajectories live in $K_\eta$ with $\eta$ sufficiently small. To conclude we need to make sure that it cannot happen that one of $(\alpha_{u/v,t})_t$ surpasses $\eta$ while the other is still close to 0 and far away from $\eta_{u/v}$. Note that up until now we did not choose explicitly values for $\eta_{u/v}$, so it will be sufficient to take them way smaller than $\eta$  to be sure that there will be no problems.

To be more precise, we will use the following bound on the maximum progress achieved in a fixed time interval.
\paragraph{Bound on maximum progress}
Supposing $\alpha_{u,t}>0$ (all that we will do here works also for $\alpha_v,t$), since the direction of the spherical gradient is tangent to the sphere it follows that $ ||\tilde w_{u,t}||>1$, hence
\begin{equation}
 \alpha_{u,t}\le \tilde\alpha_{u,t}= \alpha_{u,t-1}-\frac{\delta}d\nabla_{sph}\mathcal L(w_{t-1})\cdot u -\frac\delta du\cdot\nabla_{sph}H^t(w_{t-1})
\end{equation}
Applying this inequality iteratively in a time interval $[t_1,t_2]$, assuming $\alpha_{u,t}>0$ in all the interval, we get that
\[
\alpha_{u,t_2}\le  \alpha_{u,t_1}-\sum_{j=t_1}^{t_2-1}\frac{\delta}d\nabla_{sph}\mathcal L(w_{j})\cdot u +\frac\delta du\cdot\nabla_{sph}H^{j+1}(w_{j})
\]
To estimate the martingale term we use the estimate from lemma 4.5 in \cite{benarous2021online} (which is an application of Doob maximal inequality):
\begin{equation}
    \label{eq:mart_estim} \sup_{T\le n} \sup_{w_o \in \mathbb S^{d-1}} \P_{w_0}\left(\max_{t\le T} \frac 1{\sqrt T} \left|\sum_{j=0}^{t-1} \nabla_{sph} H^{j+1}(w_j)\cdot u \right|\ge r\right)\le \frac{2C_1}{r^2},
\end{equation} where $r>0$ is a positive parameter and $C_1$ comes from assumption \ref{assumptionB}. Together with \eqref{eq:Adef}, we have that with probability larger than $1-\frac{2C_1}{r^2}$
\begin{equation}
    \label{eq:max_disp}
    \alpha_{u,t_2}- \alpha_{u,t_1}\le \frac{\delta}{Ad}(t_2-t_1) +\frac{\delta}{d}\sqrt{t_2-t_1}r 
\end{equation}
assume that at $t_1$ we have that $\alpha_{u,t_1}\ge \eta_1$ and that at $\alpha_{u,t_2}=\eta_2$ while $\alpha_{u,t}\ge \eta_1$ for all $t \in[t_1,t_2]$, choosing $r=\sqrt{t_2-t_1}$ we get that with probability larger than $1-\frac{2C_1}{t_2-t_1}$
\begin{equation} \label{eq:t1t2_low_bound}
t_2-t_1\ge C(\eta_2-\eta_1)\frac{d}\delta\gg C(\eta_2-\eta_1) d \log d
\end{equation}
so in the limit $d\to \infty$ with probability converging to 1 $t_2-t_1\gg d\log d$.

\paragraph{Application of the bound}
Suppose that at time $t_2$ $\alpha_{u,t_2}= \eta=:\eta_u$, hence there exists $t_1$ such that $\alpha_{u,t}\ge \frac{\eta} 2$ for all $t \in [t_1,t_2]$. By \eqref{eq:t1t2_low_bound}, $t_2-t_1\ge \frac{C\eta}2 d\log d$. If $\alpha_{v,t}$ is larger than $\alpha_{u,t}$ for any $t$ in this interval, the statement is already verified by taking $\eta_{v}=\eta_u$, so assume $\alpha_{v,t}\le\alpha_{u,t}$ for $t\in [t_1,t_2]$. We can apply \cref{eq:diff_ineq_v} to get that with probability converging to 1 in the limit $d\to \infty$:
\[
\alpha_{v,t_2}\ge \frac{\alpha_{v,t_1}}2+\frac{\delta}{8d}\sum_{j=t_1}^{t_2-1}c_{11}\alpha_{u,j}\ge c_{11}\frac{\delta\eta_u}{16d}(t_2-t_1)\ge C \eta_u^2
\]
where the constant $C$ does not depend on $d$, hence we can take $\eta_v:=C\eta_u^2$ there is weak recovery for the cumulant spike $v$ even in the "unlucky" case where the overlap along $u$ reaches $\eta$ very fast.

The vice versa can be done analogously: assume  that at time $t_2$ $\alpha_{v,t_2}=\eta=: \eta_v$. Again, let  $t_1$ such that $\alpha_{v,t}\ge \frac{\eta} 2$ for all $t \in [t_1,t_2]$. By \cref{eq:t1t2_low_bound}, $t_2-t_1\ge \frac{C\eta}2 d\log d$. Now note that by \eqref{eq:monotonicity_estim_u} $-u\cdot \nabla_{\text{sph}}L(w)\ge \frac{c_{11}\alpha_{v,t}}4$, so the same reasoning as before applies leading to weak recovery with $\eta_u:=C\eta_v^2$.

\section{Details on the experiments}

\subsection{Experiments with two-layer networks}%
\label{app:figure-details}

\paragraph{Training} In all our experiments with two-layer neural networks on \emph{synthetic} data, be it on the mixed cumulant model or in the teacher-student setup, we
trained two-layer neural networks using online stochastic gradient with
mini-batch size 1. Note that we used different learning rates for the first and
second layer, with the second layer learning rate $\eta_2 = \epsilon \eta_1$ and
$\epsilon=0.01$ unless otherwise noted. It has been noted several times that
rescaling the second-layer learning rate in this way ensures convergence to a
well-defined mean-field limit~\citep{riegler1995line, berthier2023learning}

\subsubsection{Figure \ref{fig:figure1}}

\paragraph{CIFAR10} We trained a two-layer neural network with $m=32^2 * 4 = 4096$ hidden neurons on grayscale CIFAR10 images to ensure the same ratio between hidden neurons and input dimension as in our experiments with synthetic data. We set the mini-batch size to 128, weight decay to $5 \cdot 10^{-4}$, and momentum to 0.9. Images were transformed to grayscale using the pyTorch conversion function. We trained for 200 epochs on the cross-entropy loss and used a cosine learning rate scheduler. The final test accuracy of the networks was just below $50\%$.

\paragraph{Synthetic data distributions} We trained the two-layer neural network
on a data set sampled from the mixed cumulant model with signal-to-noise ratios
$\beta_m=1, \beta_u=5, \beta_v = 10$. The test error on this data set is shown
in red (``Full data''). For the independent latent variables, we used
$\lambda^\mu\sim\mathcal{N}(0, 1)$ and $\nu^\mu=\pm 1 $ with equal
probability. For correlated latent variables, we set
$\nu^\mu=\mathrm{sign}(\lambda^\mu)$. We also show the test loss of the same
network evaluated on the following censored data sets:
\begin{itemize}
\item $\beta_m = 1, \beta_u = 0, \beta_v = 0$ (blue, ``mean only'')
\item $\beta_m = 1, \beta_u = 5, \beta_v = 0$ (green, ``mean + cov'')
\item A Gaussian equivalent model (orange), where we sample inputs for the
  spiked class from a Gaussian distribution with mean $\beta_m m$ and a
  covariance
  \begin{equation}
    \label{app:cov}
    \mathrm{cov}_{\mathbb{P}}(x, x) = \id + \beta_u u u^\top + \sqrt{\beta_u
      \beta_v (1-\gamma)^2}\;  \EE \lambda \nu \left( u v^\top + v u^\top \right)
  \end{equation}
\end{itemize}

\subsubsection{Teacher-student setup (figure \ref{fig:teacher-student})}

We trained the same two-layer network with the same hyperparameters as in
\cref{fig:figure1}. Defining the pre-activations $\lambda_i = \langle u_i, x
\rangle$ for the three spikes $u_i$, the tasks were:
\begin{description}
\item[A] inputs $x\sim \mathcal{N}(0, \id)$, teacher
  \begin{equation}
    y(x) = h_1(\lambda_1) + h_2(\lambda_2) +
    h_4(\lambda_3)
  \end{equation}
\item[B] For the simulation shown in A, We computed the absolute normalised
  overlaps $|\langle w_k, u_i \rangle| / \| w_k \|$ of all the first-layer
  weights $w_k$ with a given spike, and took the average over the five highest
  values.
\item[C] Same as in B, except that we sampled inputs from a normal distribution
  with zero mean and covariance $\id + \gamma (u v^\top + v u^\top)$,
  which introduces correlations between the pre-activations $\lambda_2$ and
  $\lambda_3$ of the teacher model.
\item[D] Same as in B, except that we trained on isotropic Gaussian inputs
  $\mathcal{N}(0, \id)$ and instead added a co-linear cross-term between spikes
  $u$ and $v$, akin to the cross-term that is found in our expansion of the
  mixed cumulant task,
  \begin{equation}
    y(x) = h_1(\lambda_1) + h_1(\lambda_2) h_1(\lambda_3) + h_2(\lambda_2) +
    h_4(\lambda_3)
  \end{equation}
\end{description}

\end{document}